\documentclass{article}

% if you need to pass options to natbib, use, e.g.:
%     \PassOptionsToPackage{numbers, compress}{natbib}
% before loading neurips_2022

% ready for submission
%\usepackage{neurips_2022}

% to compile a preprint version, e.g., for submission to arXiv, add add the
% [preprint] option:
%     \usepackage[preprint]{neurips_2022}

% to compile a camera-ready version, add the [final] option, e.g.:
%\usepackage[final]{neurips_2022}

% to avoid loading the natbib package, add option nonatbib:
\usepackage[final,nonatbib]{neurips_2022}

\usepackage[ruled,vlined,linesnumbered]{algorithm2e}
\usepackage{amsfonts}       % blackboard math symbols
\usepackage{amsmath}
\usepackage{amsthm}
\usepackage{booktabs}       % professional-quality tables
\usepackage{bm}
\usepackage{caption}
\usepackage{enumitem}
\usepackage[T1]{fontenc}    % use 8-bit T1 fonts
\usepackage{graphicx}
\usepackage{hyperref}       % hyperlinks
\usepackage[utf8]{inputenc} % allow utf-8 input
\usepackage{nicefrac}       % compact symbols for 1/2, etc.
\usepackage{makecell}
\usepackage{mathtools}
\usepackage{microtype}      % microtypography
\usepackage{multirow}
\usepackage{subcaption}
\usepackage{url}            % simple URL typesetting
\usepackage{wrapfig}
\usepackage{xcolor}         % colors

\newcommand{\Bzero}{\bm{0}}
\newcommand{\Bone}{\bm{1}}

\newcommand{\Bv}{\bm{v}}

\newcommand{\BA}{\bm{A}}

\newcommand{\BB}{\bm{B}}

\newcommand{\BBbar}{\smash{\overline{\bm{B}}}}

\newcommand{\BC}{\bm{C}}

\newcommand{\BD}{\bm{D}}

\newcommand{\BI}{\bm{I}}
\newcommand{\BK}{\bm{K}}
\newcommand{\BKhat}{\smash{\hat{\bm{K}}}}
\newcommand{\BM}{\bm{M}}

\newcommand{\BQ}{\bm{Q}}
\newcommand{\BQhat}{\smash{\hat{\bm{Q}}}}

\newcommand{\BS}{\bm{S}}

\newcommand{\BShat}{\smash{\hat{\bm{S}}}}

\newcommand{\BV}{\bm{V}}
\newcommand{\BW}{\bm{W}}
\newcommand{\BX}{\bm{X}}

\newcommand{\BY}{\bm{Y}}
\newcommand{\BYbar}{\smash{\overline{\bm{Y}}}}
\newcommand{\BZ}{\bm{Z}}
\newcommand{\BZbar}{\smash{\overline{\bm{Z}}}}

\newcommand{\loss}{\mathcal{L}}

\newcommand{\R}{\mathbb{R}}

\newcommand{\E}{\mathbb{E}}

\newcommand{\calE}{\mathcal{E}}
\newcommand{\calO}{\mathcal{O}}

\newcommand{\diag}{\text{diag}}

\DeclarePairedDelimiter\floor{\lfloor}{\rfloor}

\newcommand{\wbigcup}{\mathop{\bigcup}\displaylimits}

\newcommand{\cutsectionup}{\vspace*{-0.15in}}
\newcommand{\cutsectiondown}{\vspace*{-0.12in}}
\newcommand{\cutsubsectionup}{\vspace*{-0.1in}}
\newcommand{\cutsubsectiondown}{\vspace*{-0.07in}}
\newcommand{\cutparagraphup}{\vspace*{-0.1in}}

\newcommand{\modelname}[0]{SBM-Transformer}
\newcommand{\sungjun}[1]{\textcolor{blue}{Sungjun says: #1}}

\newtheorem{theorem}{Theorem}
\newtheorem{lemma}{Lemma}
\newtheorem{assumption}{Assumption}

\title{Transformers meet Stochastic Block Models:\\Attention with Data-Adaptive Sparsity and Cost}

% The \author macro works with any number of authors. There are two commands
% used to separate the names and addresses of multiple authors: \And and \AND.
%
% Using \And between authors leaves it to LaTeX to determine where to break the
% lines. Using \AND forces a line break at that point. So, if LaTeX puts 3 of 4
% authors names on the first line, and the last on the second line, try using
% \AND instead of \And before the third author name.

\author{%
  Sungjun Cho$^1$ \quad Seonwoo Min$^1$ \quad Jinwoo Kim$^2$\\
  \textbf{Moontae Lee}$^{1,3}$ \quad \textbf{Honglak Lee}$^1$ \quad \textbf{Seunghoon Hong}$^{2,1}$\\
  $^1$LG AI Research \quad $^2$KAIST \quad $^3$University of Illinois Chicago\\
}

\begin{document}

\maketitle

% Keywords: Efficient Transformers, Sparse Attention, Stochastic Block Model
% TL;DR: An efficient Transformer that data-adaptively samples sparse or dense attention patterns from a Stochastic Block Model.
\begin{abstract}
  To overcome the quadratic cost of self-attention, recent works have proposed various sparse attention modules, most of which fall under one of two groups: 1) sparse attention under a hand-crafted patterns and 2) full attention followed by a sparse variant of softmax such as $\alpha$-entmax. Unfortunately, the first group lacks adaptability to data while the second still requires quadratic cost in training. In this work, we propose SBM-Transformer, a model that resolves both problems by endowing each attention head with a mixed-membership Stochastic Block Model (SBM). Then, each attention head data-adaptively samples a bipartite graph, the adjacency of which is used as an attention mask for each input. During backpropagation, a straight-through estimator is used to flow gradients beyond the discrete sampling step and adjust the probabilities of sampled edges based on the predictive loss. The forward and backward cost are thus linear to the number of edges, which each attention head can also choose flexibly based on the input. By assessing the distribution of graphs, we theoretically show that SBM-Transformer is a universal approximator for arbitrary sequence-to-sequence functions in expectation. Empirical evaluations under the LRA and GLUE benchmarks demonstrate that our model outperforms previous efficient variants as well as the original Transformer with full attention. Our implementation can be found in \url{https://github.com/sc782/SBM-Transformer}.
\end{abstract}

\section{Introduction}\label{sec:introduction}
The Transformer \cite{transformer} architecture has been the go-to method for encoding sequential data, due to its superior performance in various tasks such as machine translation \cite{Transformer-NMT}, image classification \cite{ViT}, and protein language modeling \cite{MSATranformer}. Its key strength stems from the multi-head attention module, where a so-called {\it attention score} matrix computes how contextually important one token is to another for all possible token pairs. Each Transformer layer simultaneously pools the token representations based on the attention scores, eventually returning contextualized features without sequentially traversing through the input sequence as its recurrent neural network-based predecessors \cite{LSTM}.
%Note the attention mechanism can also be viewed as running graph convolution via message passing on a fully connected graph.

A well-known drawback of the original Transformer is its high computational cost in time and memory that increases quadratically with sequence length. This is due to the full pairwise computation of attention scores, which prohibits applying it in tasks involving long-range dependencies such as document summarization \cite{huang2021efficient} or high-resolution image processing \cite{zhang2021multi}. Many works have thus focused on developing more efficient alternatives by exploiting fixed or learnable attention sparsity patterns \cite{sparsetransformer,bigbird,reformer,smyrf}, low-rank approximations \cite{linformer,nystromformer}, or kernelized attention modules~\cite{lineartransformer, performer}. 

% These alternatives are far from sufficient, however, often failing to convince practitioners to replace original full attention despite expressibility guarantees under sparse attention~\cite{yun2020}. This is mostly due to the same modification being applied across all layers and heads, imposing too strong an inductive bias which lead to sub-optimal cost vs. performance trade-offs in downstream tasks~\cite{narang2021}. This is especially evident with respect to attention sparsity, as many state-of-the-art systems suggest the need for a mixture of dense and sparse attention layers. For example, a qualitative analysis on pretrained BERT showed that lower layers exhibit broad dense attention while upper layers perform focused sparse attention (e.g., attending to periods only)~\cite{clark2019}. In case of GPT-3~\cite{gpt-3}, the Transformer blocks are manually arranged to alternate between locally-banded and full attention. While some work proposed models that adaptively choose attention sparsity based on data~\cite{correia2019,sukhbaatar2019}, they suffer from $\calO(N^2)$ asymptotic cost as they require the full attention score matrix before sparsification.
% %Neurological insight on how the connections among babys' cortical columns in the brain sparsify as they age, resulting in a neocortex with sparsely connected cortical columns throughout adulthood. 
% %\sungjun{{\it Inductive bias is too strong in most efficient variants. There are data-adaptive sparsity methods, which still suffer from quadratic complexity.}}

Even though the efficient alternatives hold theoretical expressibility guarantees~\cite{yun2020}, they are far from sufficient, still failing to convince practitioners to replace the original Transformer. We believe this is mostly due to their lack of adaptability. They apply the same modifications to unanimously sparsify all the attention modules across layers, without considering the tasks at hand. Such strategy imposes inductive bias too strongly and often leads to sub-optimal cost vs. performance trade-offs in downstream tasks~\cite{narang2021}. In this work, we argue that to retain the utmost potential of Transformers, each attention module should have the ability to flexibly choose between sparse and full attention. This is especially evident when considering many state-of-the-art systems suggest the need for a mixture of dense and sparse attention layers. For example, a qualitative analysis on pretrained BERT showed that lower layers exhibit broad dense attention while upper layers perform focused sparse attention~\cite{clark2019}.  In the case of GPT-3~\cite{gpt-3}, the Transformer blocks are manually arranged to alternate between dense and sparse attention.

To contribute to the efficient Transformers lineage, we propose \modelname{}, capable of adjusting its attention sparsity data-adaptively based without fully computing the attention score matrix (Figure~\ref{fig:overall}). Leveraging a mixed-membership Stochastic Block Model (SBM)~\cite{mmsbm}, each attention head samples a bipartite graph connecting queries to keys. Then, the adjacency of the sampled graph is used as an attention mask so that only attention scores corresponding to sampled edges are computed. 
%The sampling procedure picks the number of edges to samples, samples pairs of clusters from an inter-cluster similarity matrix, and lastly samples query- and key-nodes based on the node-to-cluster interactions. 
The overall computational cost is linear in the number of edges, which can range from linear to quadratic in sequence length depending on the data and task under concern. Each attention head is equipped with its own underlying SBM, enabling the model to diversify the attention sparsity across heads and layers. By incorporating a straight-through estimator~\cite{ste} in the discrete graph-sampling step, \modelname{} enjoys end-to-end differentiability and can find the proper attention sparsity based solely upon minimizing the predictive loss. The model can also easily be further regularized by penalizing the number of sampled edges, which results in a lighter model using less computational resources during inference. To the best of our knowledge, our method is the first Transformer architecture that can data-adaptively choose between linear to full attention with respective computational costs. To summarize, our main contributions are as follows:
%Evaluations under a synthetic task and the Long Range Arena (LRA)~\cite{lra} benchmark show that \modelname{} outperforms not only previous efficient Transformer variants, but also vanilla Transformer with dense attention. 
%\sungjun{we should mention exact numbers once experiments are done. How much memory can we save? How more accurate was it in LRA?}
% \sh{We need to discuss something like `our method can learn sparsity patterns adaptively per layer..' since we discussed this problem in the previous paragraph.}
% \sh{Should we also emphasize that `we also demonstrate that we can easily extend our method to flexibly constrain the computation budget for training..'?}
%\sungjun{We propose our method, which can express diverse sparsity patterns from FFN to full attention, with respective computational cost. Experiments show our method performs better than others.}

\begin{figure}[!t]
    \centering
    \includegraphics[trim={0mm 116mm 67mm 0 },clip,width=\textwidth]{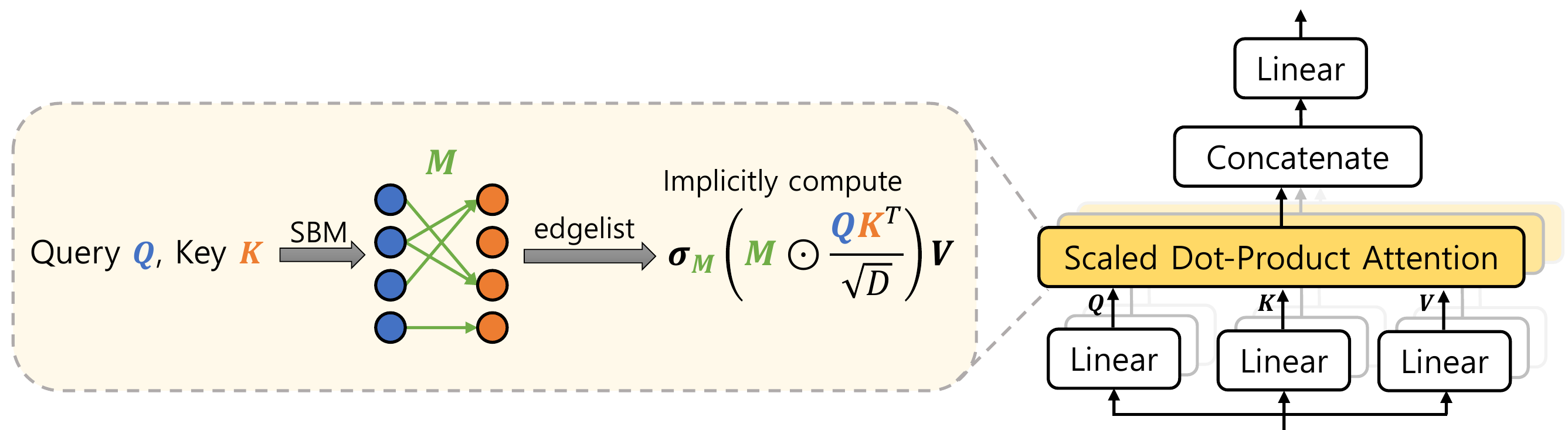}
    \vspace{-5mm}
    \caption{The attention module in SBM-Transformer. In multi-head attention, each attention head samples a bipartite graph connecting queries to keys from an underlying SBM. The adjacency of the sampled graph is used as an attention mask to compute the dot products only for the sampled edges.}
    \label{fig:overall}
    \vspace{-4mm}
\end{figure}

\begin{itemize}
    \item We present \modelname{}, a novel Transformer of which each attention head can adaptively adjust its attention sparsity as well as computational cost based on the input data.
    \item To demonstrate the benefit of this flexibility, we theoretically prove that \modelname{} retains universal approximability, and also stress-test the model under a synthetic task where full attention is required to achieve 100\% accuracy.
    \item Evaluations on LRA and GLUE benchmarks show that \modelname{} outperforms previous efficient Transformer models as well as the vanilla Transformer with dense attention.
\end{itemize}

\cutsubsectionup
\section{Related Work}\label{sec:related}
\cutsubsectiondown
In this section we discuss previous efficient Transformer variants and several works similar to ours with respect to adaptively learning sparse attention patterns. We also review several works on SBMs.

\cutparagraphup
\paragraph{Efficient Transformers.} Many efficient Transformers tackle to reduce the quadratic cost of multi-head attention with different approaches. While we discuss only a handful of representative approaches, a much more comprehensive survey can be found in~\cite{etsurvey}. The Linear Transformer~\cite{lineartransformer} achieves linear complexity by replacing the softmax with a low-rank kernelized function. Linformer~\cite{linformer} and Nystr\"omformer~\cite{nystromformer} use a similar approach by low-rank approximating the attention score matrix. Performer~\cite{performer} uses positive orthogonal random features to approximate the softmax kernel. Reformer~\cite{reformer} gathers similar tokens together through locality-sensitive hashing (LSH) and performs attention amongst tokens within the same bucket. Of all methods above, our method is most similar to Reformer, in the sense that we adaptively assign queries and keys into clusters and form a low-rank sparse attention pattern. However, our method performs soft-clustering with much less structural constraints, allowing each attention head to represent a wider variety of dependency structure and to adjust its sparsity towards full attention if needed.

\cutparagraphup
\paragraph{Adaptive Sparsity.} With respect to flexible training between sparse and dense attention, there exist some works that parameterize how sparse the attention pattern should be based on the input. The Adaptive Sparse Transformer~\cite{correia2019} proposed replacing the usual softmax activation with $\alpha$-entmax, in which the $\alpha$ parameter can be differentiably trained to adjust the activation between softmax and sparsemax activation~\cite{martins2020}. SparseBERT~\cite{sparsebert} uses a differentiable masking technique where each attention mask is sampled from a Gumbel-sigmoid distribution using data-independent mask probability parameters.
While these methods 
possess the flexibility to adjust between sparse and full attention based on data, they still require full computation of the attention score matrix before sparsification, and hence are unable to leverage the learned sparsity towards better model efficiency. To the best of our knowledge, ours is the first work to be able to adaptively tune its attention sparsity between sparse to full attention without requiring the explicit computation of the attention score matrix, thereby avoiding quadratic cost when possible. 
%There are also several work in the network pruning literature where a trainable mask per weight matrix is learned to make the model more compact and portable~\cite{liu2020}. 
%\sungjun{Maybe move \cite{correia2019, martins2020, sparsebert, fan2021} to EfficientTransformers paragraph, and focus more on sparse training for this paragraph.}

\cutparagraphup
\paragraph{Stochastic Block Models.} The Stochastic Block Model (SBM) is a generative model that encodes the latent structure of graphs by grouping nodes into clusters. By modeling the cluster-membership of each node as well as inter-cluster relationships, SBMs can represent a wide variety of graph structures, which is a feature especially useful for generating new graphs or predicting missing edges in noisy data~\cite{abbe2017community}. The standard SBM assigns each node to a single cluster, and the probability of an edge between two nodes strictly depends on the corresponding clusters. Several structural extensions include overlapping SBM~\cite{osbm} and mixed-membership SBM~\cite{mmsbm}, which allow each node to be assigned to multiple clusters. The underlying SBM used by our framework mostly resembles these two variants, while the edge probability is modeled by a nonlinear function of two node embeddings rather than a bilinear one. There exist many other extensions including degree-corrected SBM~\cite{dcsbm} for multi-graphs and hierarchical SBM~\cite{hsbm} for multiplex-graphs. Further details can be found in a recent survey~\cite{funke2019sbm}.
%Under noisy or missing graph data, the Differentiable Graph Module (DGM) \cite{dgm} generates a graph given node representations, then performs graph convolution on the generated graph, from which the updated node representations are fed to the upcoming layer. In this case, the gradient from the loss is unable to propagate through the graph sampling module due to discrete sampling, and hence DGM incorporates a reward/penalty scheme where every edge in a graph resulting in a correct classification is rewarded, penalized if incorrect.

%\paragraph{Network Pruning.} Outside the Transformer literature, there has been work in pruning general neural network architectures to reduce the model size while retaining the same predictive performance.

\cutsubsectionup
\section{Preliminaries: Sparse Transformers}\label{sec:background}
\cutsubsectiondown
\iffalse
\begin{figure}[!t]
    \centering
    \includegraphics[width=\textwidth,height=4cm]{example-image-a}
    \caption{Mask diagrams for different sparse attention mechanisms. Put full attention, sparse transformer, Reformer, and ours}
    \label{fig:attention_masks}
\end{figure}
\fi

We first introduce the full attention mechanism used in the original Transformer~\cite{transformer} as well as masked attention which will serve as a backbone of our approach.

\cutparagraphup
\subsection{Full Attention}
\cutsubsectiondown
In vanilla Transformer \cite{transformer}, each attention head takes a sequence of token features as input $\BX \in \R^{n \times d}$ where $n$ is the sequence length and $d$ the embedding dimension. Weight parameters $\BW^Q, \BW^K \in \R^{d \times d_h}$ and $\BW^V \in \R^{d \times d_h}$ with head-dimension $d_h$ first maps the input features $\BX$ into query $\BQ$, key $\BK$, and value $\BV$, respectively. Then, the {\it attention score matrix} is computed with scaled dot-product of queries and keys followed by row-wise softmax activation $\sigma(\cdot)$. Note that explicit computation of this matrix is the main bottleneck of full attention, incurring $\calO(n^2)$ asymptotic cost in both time and memory. The value features $\BV$ are then pooled based on the attention scores, returning the output token representations. Altogether, the operation performed by each attention head can be written as
\begin{gather}
    \BQ = \BX \BW^Q,\;\; \BK = \BX \BW^K,\;\; \BV = \BX \BW^V\\
    \texttt{Attn}(\BX) = \sigma\left(\dfrac{\BQ\BK^T}{\sqrt{d_h}}\right) \BV.
\end{gather}

\cutsectionup
\subsection{Masked Attention}
\cutsubsectiondown
One way to remove the quadratic bottleneck from the attention score matrix is to apply a binary mask $\BM \in \{0,1\}^{n \times n}$ and compute the scaled dot-products $\BQ_i\BK_j^T/\sqrt{d_h}$ only if $\BM_{ij} = 1$. In presence of an attention mask, the operation is modified to
\begin{gather}
    \texttt{Attn}_{\text{mask}}(\BX, \BM) = \sigma_{\BM}\left(\BM \odot \dfrac{\BQ\BK^T}{\sqrt{d_h}}\right) \BV\\
    \sigma_{\BM}(\BA)_{ij} \coloneqq 
    \begin{cases}
        \dfrac{\exp(\BA_{ij})}{\sum_{k \in \{k' | \BM_{ik'} = 1\}} \exp(\BA_{ik})} & \text{if}\;\; \BM_{ij} = 1\\
        \hfil 0 & \text{otherwise}
    \end{cases}
\end{gather}
where $\odot$ indicates entry-wise multiplication. Note that the masked-softmax $\sigma_{\BM}(\cdot)$ operator only computes unmasked terms, ensuring that each $(i,j)$-th attention score survives as nonzero if and only if $\BM_{ij} = 1$. This is thus equivalent to filling in the $(i,j)$-th attention score with $-\infty$ if $\BM_{ij}=0$, then applying the standard softmax operator. Most sparsity-based efficient Transformers fall under this formulation, while using different methods to either manually fix or learn the mask $\BM$. For instance, local attention~\cite{sparsetransformer,longformer,bigbird} with a sliding window sets $\BM_{ij} = 1$ if $|i-j|<c$ for some context window size $c$ while Reformer~\cite{reformer} sets $\BM_{ij} = 1$ if $\BQ_i$ and $\BK_j$ are hashed into the same bucket. 

\cutsectionup
\section{Our Method: SBM-Transformer}\label{sec:method}
\cutsectiondown
\begin{figure}[!t]
    \centering
    \includegraphics[trim={5mm 129mm 140mm 0 },clip,width=\textwidth]{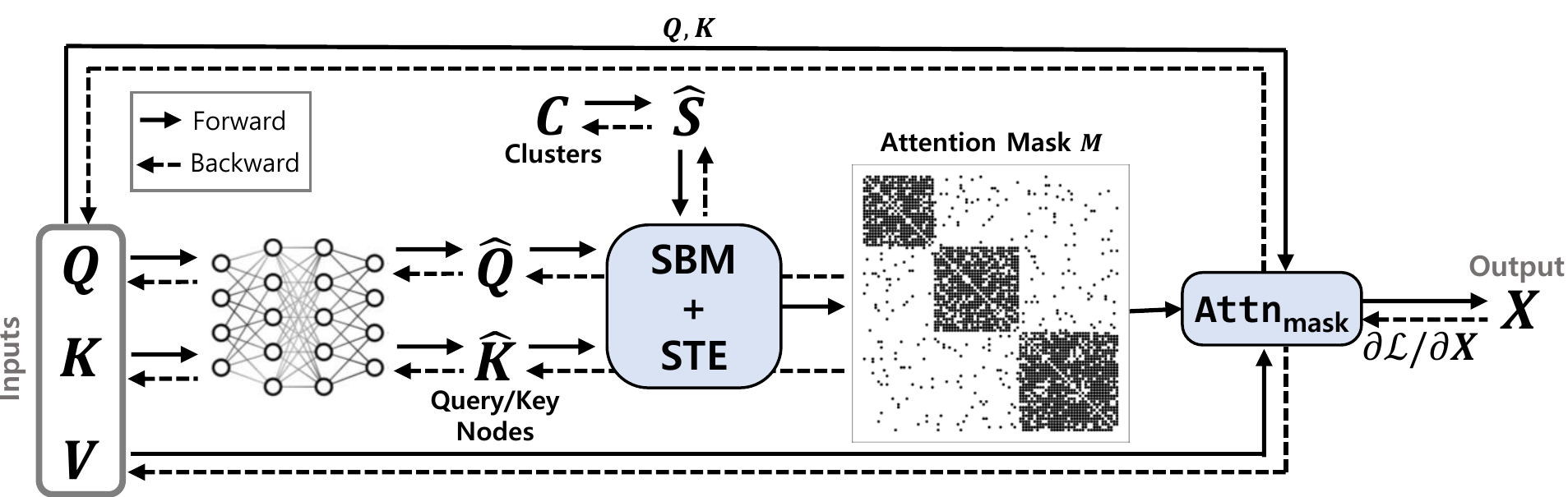}
    \vspace{-5mm}
    \caption{An illustration of the attention mechanism in SBM-Transformer. Each head first maps queries and keys to the node representation space through a shared MLP. The graph sampling module samples an attention mask from a Stochastic Block Model (SBM) parameterized by the node and cluster embeddings. The discrete sampling step is differentiable via a Straight-Through Estimator (STE). Given the mask, the output is computed via masked attention.}
    %\sungjun{Too much detail about the algorithm. Show the big pieces first: Query, Key, value goes in, 2. bipartite graph comes out 3. Masked attention is performed. Move this maybe to figure 2 instead and have a more intuitive figure 1.}
    \vspace{-5mm}
    \label{fig:framework}
\end{figure}

Here we discuss the details of \modelname{} (Figure~\ref{fig:framework}). We first illustrate the forward step of our attention module and how the underlying SBM~\cite{mmsbm} of each head, from which we sample our attention masks, is parameterized by the input tensors. We then discuss how the model enables end-to-end differentiability despite the discrete graph sampling step.
%review the original masked self-attention as well as \texttt{fastRG}~\cite{fastrg}, a fast random graph sampling procedure under a stochastic block model. We then discuss the details of the forward step with mask sampling and how the straight-through estimator is used during backpropagation to achieve full differentiability.
%\sungjun{Need to discuss more on how edge probabilities are defined. How are we parametrizing them? fastRG is just one way of sampling from the SBM model: no need to discuss up-front, discuss edge probability first, then briefly mention we use fastRG for linear cost in m.}

\cutsectionup
\subsection{Forward step with the Stochastic Block Model}
\cutsectiondown

In our framework, we view the attention mask $\BM$ as an adjacency matrix of a bipartite graph that connects queries to keys, and let each attention head sample an adjacency matrix that best represents the contextual dependencies amongst input tokens.
In order to efficiently sample adjacency matrices while avoiding the quadratic cost, the distribution of graphs must first be parameterized with a sub-quadratic number of latent variables. Stochastic Block Models fit perfectly for our purpose as it models graphs that are low-rank structured with $k$ latent clusters, allowing full parameterization using $\calO(nk)$ memory. More concretely, the SBM distribution is defined by two nonnegative node-to-cluster memberships $\BY,\BZ \in \R_+^{n\times k}$ and a so-called block matrix $\BB \in \R_+^{k \times k}$ that stores the inter-cluster connection probabilities. The probability of node $i$ being connected to node $j$ is computed as $p(i,j) = \BY_i \BB \BZ_j^T$. Equivalently, the expectation of the adjacency matrix sampled from $\BA \sim SBM(\BY,\BB,\BZ)$ can be written as $\E[\BA] = \BY \BB \BZ^T$. 
%\sungjun{Need more discussion on the SBM portion, not fastRG. Discuss how what properties a stochastic block model should take, when is SBM sparse and when is it dense? The graph can be flexible. It can be disconnected with disjoint clusters}

%The overall strategy involves computing a node embedding for each query and key via a shared MLP, then sampling a low-rank bipartite graph based on node and cluster representations. 
%
For proper parameterization of the SBM, we must infer the nonnegative node-memberships and block matrix from the queries and keys. To do so, we equip each attention head a 2-layer $\text{MLP}_{d_h \to d_h}$ with ReLU activation, and a set of $k$ trainable cluster-embeddings $\BC \in \R^{k \times d_h}$. First, our model computes the block matrix $\BShat \in \R_+^{k \times k}$ by taking dot products amongst cluster-embeddings $\BC$ followed by a 2-dimensional softmax activation. The node embeddings are obtained by processing each query and key through the $\text{MLP}_{d_h \to d_h}$, mapping token representations into the node representation space.
%Let $\BQhat, \BKhat \in \R^{n \times k}$ denote the respective output node-embeddings. 
The memberships of query and key nodes, which we denote by $\BQhat$ and $\BKhat$, are then inferred by taking dot products of node and cluster embeddings, followed by a sigmoid function. The block matrix $\BShat$, query node-memberships $\BQhat$, and key node-memberships $\BKhat$ altogether provide a well-defined parameterization for the SBM. Thus, a bipartite graph adjacency $\BM \in \{0,1\}^{n \times m}$ can be sampled from $\BM \sim SBM(\BQhat, \BShat, \BKhat)$ with expectation $\E[\BM] = \BQhat \BShat \BKhat^T$: the probability of connecting query $\BQ_i$ to key $\BK_j$ equals $p(i,j) = \BQhat_i \BShat \BKhat_j^T$. Formally, the sampling procedure can be written as

\begin{align}
    \BShat &= \texttt{softmax}(\BC \BC^T)\\
    \BQhat &= \texttt{sigmoid}(\text{MLP}_{d_h \to d_h}(\BQ) \BC^T)\\
    \BKhat &= \texttt{sigmoid}(\text{MLP}_{d_h \to d_h}(\BK) \BC^T)\\
    \BM &\sim SBM(\BQhat, \BShat, \BKhat)
\end{align}

%\sungjun{Note that the parametrization does not exactly align with the prototypical MMSBM where each node is assigned a mixed-membership that sums up to 1. In the exact SBM We find that making the sparsity dependent upon the input sequence rather than the cluster embeddings provide better adaptability} 

\IncMargin{1.5em}
\begin{algorithm}[!t]
  \SetAlgoLined
  \SetKwInOut{Input}{Input}
  \SetKwInOut{Output}{Output}
  \Indm 
  \Input{$\BY \in \R^{n \times k}_+$, $\BB \in \R^{k \times k}_+$, $\BZ \in \R^{n \times k}_+$}
  \Output{$\BM \in \{0,1\}^{n \times n}$ with $\E[\BM] = \BY\BB\BZ^T$}
  \Indp 
  %Compute query-cluster affinities $\BQhat = \sigma(\BQ \BC^T)$\\
  %Compute key-cluster affinities $\BKhat = \sigma(\BK \BC^T)$\\
  %Compute inter-cluster affinities $\BShat = \sigma_{all} (\BC \BC^T)$\\
  Compute diagonal matrices $\BD_{\BY} = (\diag(\Bone\BY))^{-1}$ and $\BD_{\BZ} = (\diag(\Bone\BZ))^{-1}$\\
  Column-normalize $\BYbar = \BY \BD_{\BY}^{-1}$ and $\BZbar = \BZ \BD_{\BZ}^{-1}$\\
  Compute $\BBbar = \BD_{\BY} \BB \BD_{\BZ}$\\
  Sample number of edges $m \sim \text{Poisson}(\Bone\BBbar\Bone^T)$\\
  Initialize $\BM = \Bzero$\\
  \For{$i = 1:m$}{
    Sample $(U,V)$ from $\{1,\dots,k\} \times \{1,\dots,k\}$ with $Pr(U=u,V=v) \propto \BBbar_{uv}$\\
    Sample source $I$ from $\{1,\dots,n\}$ with $Pr(I=i) = \BYbar_{iU}$.\\
    Sample destination $J$ from $\{1,\dots,n\}$ with $Pr(J=j) = \BZbar_{jV}$\\
    Set $\BM_{IJ} = 1$.
  }
  \caption{\texttt{fastRG}$(\BY, \BB, \BZ)$\cite{fastrg}}
  \label{alg:fastRG}
\end{algorithm}
\DecMargin{1.5em}

For the last sampling step, we incorporate a fast random graph sampling algorithm $\texttt{fastRG}$ (Alg.~\ref{alg:fastRG}, \cite{fastrg}) that can sample graphs from a SBM in time and memory asymptotically linear in the number of edges. 
%The pseudocode of $\texttt{fastRG}$ is shown in Algorithm~\ref{alg:fastRG}. 
One advantage of $\texttt{fastRG}$ is that each edge can be sampled in parallel, allowing high efficiency with the help of multiprocessing. A more significant feature of the method is that the number of edges, which determines the overall cost, is sampled from a Poisson distribution with input-dependent mean (Line 4). Thus, the model can dynamically adjust its computational cost between linear and quadratic in sequence length based on the data.

Figure~\ref{fig:sbm_sparsity} shows example placements of nodes and clusters on the $d_h$-dimensional space to show how the sparse structure is determined. If all nodes and clusters are gathered closely, then all entries in $\BQhat$ and $\BKhat$ become close to 1, resulting in $p(i,j) \approx 1$ for all $i,j$ and hence a dense $\BM$. If clusters are well-separated but each surrounded by some set of nodes, $\BShat$ becomes close to diagonal while each row in $\BQhat$ and $\BKhat$ is close to a one-hot vector indicating the cluster nearby. Such setting leads to a block diagonal mask similar to LSH bucketing of Reformer~\cite{reformer}. Lastly, if all clusters are far apart from the nodes, both $\BQhat$ and $\BKhat$ approximately equal zero, zeroing out all the edge probabilities.

\subsection{Backward Step with Straight-Through Estimator}

% Since the mask sampling procedure is inherently discrete, the gradient flowing to the mask is zero almost everywhere, rendering the mask unlearnable when implemented naively. To enable training, we incorporate straight-through estimator (STE)~\cite{ste} to connect gradients as if we had used the continuous expected adjacency as our attention mask. This way, the probabilities of sampled edges can be adjusted based solely on the predictive task. This is differs from Differentiable Graph Modules (DGM)~\cite{dgm} that manually inject penalties and rewards to edge probabilities based on the classification results.

The graph sampling procedure is naturally a discrete operation. Thus, naive backpropagation cannot learn the proper parameterization for the SBM that minimizes the predictive loss. To cope with this non-differentiability, we incorporate a Straight-Through Estimator (STE)~\cite{ste} to pass the gradient beyond the discrete sampling step. The STE enables providing the gradient $\partial\loss/\partial\BM_{ij}$ to the probability for each sampled edge $(i,j)$ (Eqn.~\ref{eqn:ste}). It works as if we had used a continuous mask $\BM \odot \E[\BM]$ that stores the probability of each sampled edge instead of the binary mask $\BM$ during forward propagation. This way, the probabilities of sampled edges can be learned end-to-end: the gradients provide information on whether each sampled edge was useful or not for prediction.

%Notice that this procedure implies that even without any sparsity regularization, the loss itself chooses whether to keep or discard the edge on its own without any interference. If needed, we can impose a global thresholded regularizer to limit the memory use of the model.

%Note that it differs from techniques used in DGMs~\cite{dgm}, which manually inject pseudo rewards/penalties to edge probabilities.
%\sungjun{In order for the model to take $\mathcal{O}(m)$ time and memory throughout, we perform backpropagation only through edges that were sampled during the forward pass. This includes the edges we added as noise for exploration.}
%\sungjun{More detail on how edge probabilities increase or decrease. It must make intuitive sense as to why edge probabilities change.}
%\vspace{-5mm}
\begin{align}
    \dfrac{\partial \loss}{\partial p_{ij}} \coloneqq \dfrac{\partial \loss}{\partial \BM_{ij}} =
    \begin{cases}
        \dfrac{\partial\loss}{\partial \BA_{ij}} \cdot \dfrac{\BQ_i \BK_j^T}{\sqrt{d_h}} & \text{if } \BM_{ij} = 1\\
        \hfil 0 & \text{otherwise}
    \end{cases}
    \text{ where }
    \BA \coloneqq \BM \odot \dfrac{\BQ\BK^T}{\sqrt{d_h}} \label{eqn:ste}
\end{align}
%A careful observation on Eqn.~\ref{eqn:ste} shows that multiplying the ${\partial\loss}/{\partial\BA_{ij}}$ term with the scaled dot product $\BQ_i\BK_j^T/\sqrt{d_h}$ aligns well with our intuition. For instance, assume we have ${\partial\loss}/{\partial\BA_{ij}} < 0$. This means that the attention score $\BA_{ij}$ for query $i$ and key $j$ must increase in order to decrease the loss $\loss$. But if $\BQ_i\BK_j^T/\sqrt{d_h} < 0$, it would be detrimental to keep edge $(i,j)$ since a negative scaled dot-product likely leads to a small attention score. Hence, the STE passes a positive gradient ${\partial\loss}/{\partial p_{ij}} > 0$, reducing the sampling probability of edge $(i,j)$ to minimize the loss.

%\sw{Intuitively, we can understand the gradients for the graph sampling in the Eqn.~\ref{eqn:ste} as follows. Assume we have ${\partial\loss}/{\partial\BA_{ij}} < 0$. It means that the query $i$ and the key $j$ pair is critical for the given prediction task. We should increase the attention score $\BA_{ij}$ in order to decrease the loss $\loss$. Then, there will be two possible cases: (1) If $\BQ_i\BK_j^T/\sqrt{d_h} > 0$, it is likely to .... (2) If $\BQ_i\BK_j^T/\sqrt{d_h} < 0$, .....}

\begin{figure}[!t]
    %  \centering
    %  \begin{subfigure}{0.32\textwidth}
    %      \centering
    %      \includegraphics[trim={0 62mm 262mm 0},clip,width=.90\textwidth]{}
    %      \caption{Dense}
    %  \end{subfigure}
    %  \hfill
    %  \begin{subfigure}{0.32\textwidth}
    %      \centering
    %      \includegraphics[trim={0 60mm 262mm 0},clip,width=.90\textwidth]{}
    %      \caption{Block-diagonal}
    %  \end{subfigure}
    %  \hfill
    %  \begin{subfigure}{0.32\textwidth}
    %      \centering
    %      \includegraphics[trim={0 60mm 262mm 0},clip,width=.90\textwidth]{}
    %      \caption{Sparse}
    %  \end{subfigure}
    \begin{center}
        \includegraphics[width=\linewidth]{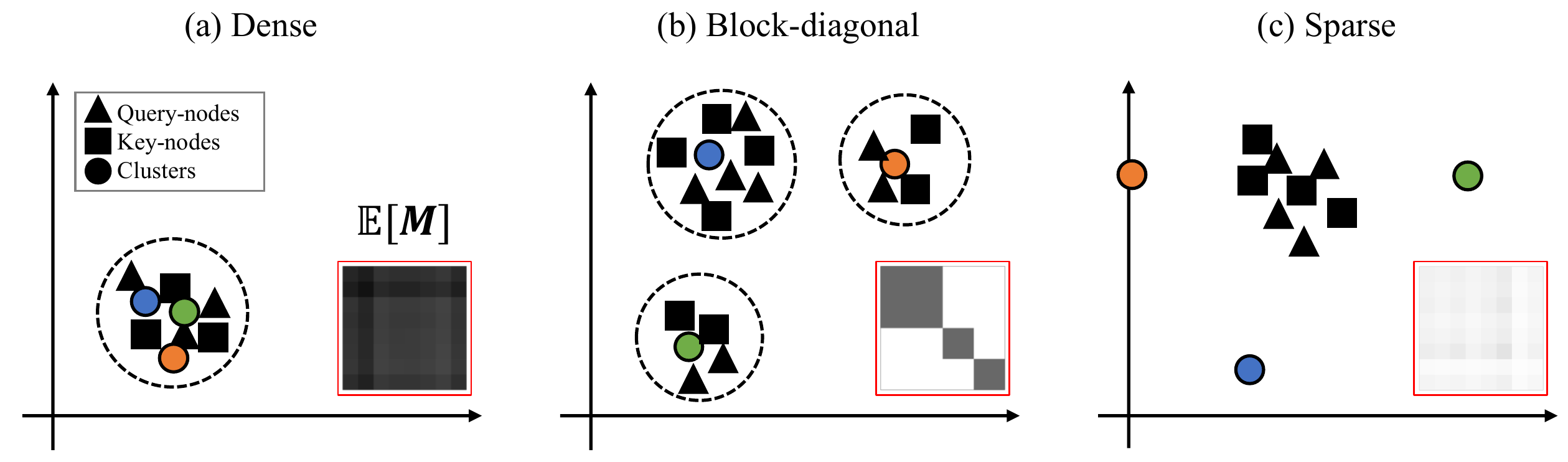}
    \end{center}
    \vspace{-3mm}
    \caption{Representative examples from the SBM and resulting mask expectations (darker grid indicates edge probability closer to 1). (a) The expected mask is dense if all nodes and clusters are collapsed within a small region. (b) Clear-cut groups in the embedding space induce a block-diagonal mask. (c) Clusters located far apart from nodes lead to sparse masks.}
     %\sh{how about a figure describing similarity between node and cluster? (you have this figure in the presentation slide).}
     \vspace{-4mm}
     \label{fig:sbm_sparsity}
\end{figure}

%\cutsectionup
\paragraph{Random Edge Exploration.} While this approach enables backpropagation in the same $\calO(m)$ cost as in the forward step, this comes at the expense of not being able to propagate information through edges that were not sampled. This can be problematic when an edge probability accidentally collapses to zero, after which the edge becomes unlikely to ever be sampled even when it may be useful for the prediction task at hand. Therefore, we add a small perturbation $\delta > 0$ to each edge probability $p_{ij}$, allowing the model to explore new edges and resuscitate their sampling probabilities if necessary. %, similarly to the concept exploration-exploitation in reinforcement learning. 
We find that a $\delta$ as small as $0.01$ significantly helps in practice, and thus use this edge exploration scheme during training for our experiments.

%\cutparagraphup
\paragraph{Wouldn't the model always prefer full attention?} 
Note that the gradient $\partial\loss/\partial p_{ij}$ can be positive, which suppresses the probability of edge $(i,j)$. At first, it may seem counter-intuitive why the model would ever limit itself to using fewer edges during training without any sparsity-based regularizations. One explanation is that masked attention provides an easy way to reduce attention scores under finite head dimensions.
%Consider a hypothetical task where each token must attend only to itself, and attending to any other token adds unwanted context. 
Under full attention, it is known that the representational space of attention score matrices is limited by the head dimension and softmax activation~\cite{bhojanapalli2020}.
%making it difficult to infer an attention score matrix that has nonzeros only on the diagonal and exactly zero everywhere else when $d_h$ is smaller than the token. 
%This limitation inevitably introduces unwanted noise in the attention scores when the desired attention score outside the model capacity, which can become non-negligible when working with long sequences.
This limitation inevitably introduces unwanted noise in the attention scores especially when working with long sequences.
In \modelname{}, however, the structural sparsity in masked attention introduces another dimension that induces a larger space of row-stochastic matrices (full attention is a special case of masked attention where $\BM_{ij}=1$ for all $i,j$). Therefore, it is reasonable that the model may encourage sparsity to leverage the additional expressiveness assuming the loss landscape has local optima within the sparse attention regime. Our experiments on the LRA benchmark show that this is indeed the case, as our \modelname{} converges to an average attention sparsity of 20\% to 30\% while outperforming Transformer with full attention.
We also show in the experiment that we can easily incorporate additional regularization that further encourages sparse attention masks.
\subsection{\modelname{} is a Universal Approximator}
\cutsubsectiondown

\iffalse
\begin{figure}[!t]
     \centering
     \begin{subfigure}{0.49\textwidth}
         \centering
         \includegraphics[width=\textwidth,height=40mm]{example-image-a}
         \caption{Forward}
         \label{fig:forward}
     \end{subfigure}
     \hfill
     \begin{subfigure}{0.49\textwidth}
         \centering
         \includegraphics[width=\textwidth,height=40mm]{example-image-a}
         \caption{Backward}
         \label{fig:backward}
     \end{subfigure}
     \caption{Forward and backward runtimes of \modelname{}. The asymptotic cost of both steps are linear to the graph sparsity.}
     \label{fig:runtimes}
\end{figure}
\fi

%\sungjun{Have figure showing that the cost of forward and backward both increase linearly in $m$.}

%\sungjun{Mention memory use is better than dense counterparts, but time can be more confusing due to edge sampling procedure}

Leveraging previous work on the theoretical expressiveness of sparse attention~\cite{yun2020,bigbird}, we show that SBM-Transformer with a small modification\footnote{Here we consider a variant of SBM-Transformer where self-loops are added manually (i.e. $\BM_{ii}=1$ for all $i$). While this is useful in theoretical analysis, we find that not having self-loops slightly helps in empirical performance and hence omit self-loops for the main experiments.} retains the same level of expressibility as full attention. Specifically, we show that the low-rank structure of the underlying SBMs does not degrade the expressive power of Transformer, and that SBM-Transformer can universally approximate arbitrary functions with $\mathcal{O}(n)$ connections. For brevity, we provide a rough overview of the proof and defer further details to Appendix A.

\begin{theorem}\label{thm:expressive}
Let $f \in \mathcal{F}$ be class of continuous sequence-to-sequence functions. $\mathcal{T}^{h,r,m}_{SBM}$ denote the class of SBM-Transformers with $h$ attention heads, $m$ head dimension, and $r$ dimensions in hidden layers. Then for any $\epsilon > 0$ and $1 \leq p < \infty$, there exists a function $g \in \mathcal{T}^{h,m,r}_{SBM}$ such that
\begin{gather}
    \int_\mathbb{D} \|f(\BX) - \mathbb{E}[g(\BX)]\|_p^p d\BX \leq \epsilon 
\end{gather}
\end{theorem}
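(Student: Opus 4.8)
The plan is to reduce to the universal approximation theorem for sparse Transformers established by Yun et al.~\cite{yun2020}, which shows that a sparse-attention Transformer whose attention patterns satisfy a mild connectivity condition (each token eventually attends, through composition of heads/layers, to every other token, and there is a ``direct path'' pattern such as a Hamiltonian-type chain plus self-loops) is a universal approximator of continuous sequence-to-sequence functions on a compact domain $\mathbb{D}$. The key point to exploit is that $\texttt{fastRG}$ samples $\BM$ with $\E[\BM]=\BQhat\BShat\BKhat^T$, so if we can choose the SBM parameters (via the cluster embeddings $\BC$ and the MLP) to make a particular target pattern $\BM^\star$ have probability $1$ — i.e. $p(i,j)=1$ exactly for the edges of $\BM^\star$ and $p(i,j)=0$ elsewhere — then $g(\BX)=g_{\mathrm{sparse}}(\BX)$ deterministically, and $\E[g(\BX)]$ collapses to the output of the fixed sparse Transformer that Yun et al.\ already handle. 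This is where the manually-added self-loops in the footnote matter: they guarantee $\BM_{ii}=1$ regardless of the sampled graph, which is one of the structural requirements in the sparse-attention universality proof.

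The steps I would carry out, in order, are: (1) Fix the target sparse pattern $\BM^\star$ on $n$ tokens to be one of the $\mathcal{O}(n)$-edge patterns admissible in~\cite{yun2020} (e.g.\ the chain/cyclic pattern together with self-loops), so it has $\mathcal{O}(n)$ edges per the theorem's claim. (2) Show that the SBM parameterization is expressive enough to realize $\E[\BM]=\BM^\star$ with all probabilities in $\{0,1\}$: pick $k$ large enough (a constant, or $\mathcal{O}(1)$ clusters suffice for a banded pattern; in the worst case $k=\mathcal{O}(n)$ still keeps memory sub-quadratic) and choose cluster embeddings so $\BShat$ is (close to) a permutation-like $0/1$ block matrix, and choose the MLP outputs so that $\texttt{sigmoid}(\text{MLP}(\BQ)\BC^T)$ and the analogous key expression are $0/1$-valued indicator vectors selecting exactly the blocks that reproduce $\BM^\star$; since $\texttt{sigmoid}$ only attains $0$ and $1$ in the limit, argue that taking the edge probabilities within $\eta$ of $0/1$ makes $\Pr[\BM\neq\BM^\star]\le \epsilon'$, and by boundedness of the attention output on $\mathbb{D}$ this contributes at most $\mathcal{O}(\epsilon')$ to the $L^p$ error. (3) Invoke~\cite{yun2020}: there is a sparse Transformer $g_{\mathrm{sparse}}$ with $h$ heads, head dimension $m$, hidden width $r$ (or the specific small constants in their construction, which we can accommodate by the usual head/layer-stacking arguments) using pattern $\BM^\star$ with $\int_\mathbb{D}\|f(\BX)-g_{\mathrm{sparse}}(\BX)\|_p^p\,d\BX\le \epsilon/2$. (4) Embed $g_{\mathrm{sparse}}$ into $\mathcal{T}^{h,m,r}_{SBM}$ by choosing the SBM parameters as in step (2) and copying the remaining weights ($\BW^Q,\BW^K,\BW^V$, feed-forward layers) verbatim; then $\E[g(\BX)]$ differs from $g_{\mathrm{sparse}}(\BX)$ by at most the $\mathcal{O}(\epsilon')$ sampling-error term, and a triangle inequality with $\epsilon'$ chosen small gives the bound $\epsilon$.

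The main obstacle I expect is step (2): verifying that the specific, somewhat rigid parameterization — cluster embeddings feeding a $2$D-softmax into $\BShat$, and a shared two-layer ReLU MLP feeding a $\texttt{sigmoid}$ into $\BQhat,\BKhat$ — can realize (to within $\eta$) the desired $0/1$ membership/block structure for an arbitrary admissible pattern $\BM^\star$. One has to be careful that the $\texttt{softmax}$ normalization of $\BShat$ does not force mass onto unwanted blocks, and that the MLP, which sees the actual queries/keys $\BQ=\BX\BW^Q$ rather than arbitrary inputs, can still output the position-dependent indicator vectors needed (this is where one leans on the fact that $\BW^Q,\BW^K$ are free parameters and positional encodings make tokens distinguishable, exactly as in~\cite{yun2020}). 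A secondary, more routine obstacle is bookkeeping the approximation budget: splitting $\epsilon$ between the Yun-et-al.\ approximation error and the SBM sampling error, and bounding the latter using that $g(\BX)$ takes values in a bounded set for $\BX\in\mathbb{D}$ so that a probability-$\epsilon'$ deviation event contributes $\mathcal{O}(\epsilon')$ in $L^p$. I would defer both to the appendix, as the excerpt indicates.
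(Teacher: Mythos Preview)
Your reduction to Yun et al.~\cite{yun2020} is the same route the paper takes, and your bookkeeping for the sigmoid/softmax not hitting exact $0/1$ values (the $\eta$-perturbation plus triangle inequality) is actually more careful than what the paper writes out. The gap is in your step (2), specifically the rank accounting. You propose a single fixed target pattern $\BM^\star$ such as a chain or cyclic pattern, but $\E[\BM]=\BQhat\BShat\BKhat^T$ has rank at most $k$, and a banded $0/1$ matrix on $n$ tokens has rank $\Theta(n)$: your claim that ``$\mathcal{O}(1)$ clusters suffice for a banded pattern'' is false, and the fallback $k=\Theta(n)$ makes $\BQhat,\BKhat\in\R^{n\times k}$ cost $\Theta(n^2)$ memory, which is not sub-quadratic. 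So while your argument can still establish the bare theorem statement (e.g.\ via the trivial rank-$1$ all-ones mask, which the paper explicitly sets aside), it cannot deliver the $\mathcal{O}(n)$-connections claim that motivates the result and that you yourself invoke in step (1).

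The paper's fix is to drop the single-pattern requirement and spread the Yun et al.\ conditions across $p=3$ separate low-rank heads: a block-diagonal pattern $\mathcal{A}^1$ partitioning the $n$ tokens into $k$ equal clusters (rank $k$), plus two rank-$1$ relay patterns $\mathcal{A}^2,\mathcal{A}^3$ connecting every token to and from $k$ designated global tokens. Each of these is genuinely low-rank and hence realizable by an SBM with $k$ clusters; it is only their \emph{union} that contains the Hamiltonian path required by Assumption~1 of~\cite{yun2020}. To push the edge count of $\mathcal{A}^1$ from $n^2/k$ down to $\mathcal{O}(n)$, the paper does not set the within-block probabilities to $1$ but instead sparsifies each diagonal block to an Erd\H{o}s--R\'enyi random subgraph, proving a separate lemma that with edge probability $p\approx e/m$ on $m$ nodes the expected number of Hamiltonian cycles is at least $1$ while the expected edge count is $\mathcal{O}(m)$. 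This is where the $\E[g(\BX)]$ in the theorem statement is doing real work, rather than just absorbing your $\epsilon'$ sampling slack. In short, the missing idea is to split the connectivity requirements across several low-rank heads instead of forcing one head to carry a high-rank pattern.
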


According to the main theorem of Yun~et~al.~(2020)~\cite{yun2019}, SBM-Transformer achieves universal approximability if 1) each node attends to itself, 2) the aggregation of all attention patterns contains a Hamiltonian path, and 3) there exists a path between all node pairs. While the first condition is trivially true due to our modification, the other two conditions require careful choice of three SBMs. Here we first parameterize one SBM to hard-assign tokens into $k$ equally-sized clusters, inducing a block-diagonal attention pattern. The other two SBMs are parameterized such that the two graphs together form a star graph with $k$ global relay tokens. Combining the three attention patterns lead to a parameterization of SBM-Transformer that satisfies all three conditions, hence proving the theorem.

\iffalse
\begin{lemma}
Given $N,K,H$, there exists a parameter set $\{\BX_h , \BY_h, \BS_h\}_{h=1}^H$ with $\BX_h, \BY_h \in [0,1]^{N\times K}$ and $\BS_h \in \Delta^{(K \times K)-1}$ such that the aggregation of random adjacency matrices $\BA^* = \lor_{h=1}^H \BA_h$ each with expectation $\mathbb{E}(\BA_h) = \BX_h \BS_h \BY_h^T$ and $\mathbb{E}(\Bone\BA_h\Bone^T) = \calO(N)$ contains a Hamiltonian cycle with high probability.
\end{lemma}
\sungjun{From BigBird, any sparse transformer usint an attention graph that contains the star graph is a universal approximator. With respect to SBM, the star graph is an adjacency with rank-2. In addition, we need augmentation with identity.}
\begin{lemma}
Assume $K>2$. Then, there exists parameter set $\{\BX, \BS, \BY\}$ such that the sampled graph $\mathcal{G}\sim \texttt{SBM}(\BX, \BS, \BY)$ contains the star graph $G$ in expectation.
\end{lemma}
\fi

\cutsubsectionup
\section{Experiments}\label{sec:experiments}
\cutsubsectiondown
\begin{figure}[!t]
     \centering
     \begin{subfigure}{0.561\textwidth}
         \centering
         \includegraphics[trim={0mm 0mm 2mm 2mm},clip,width=\textwidth]{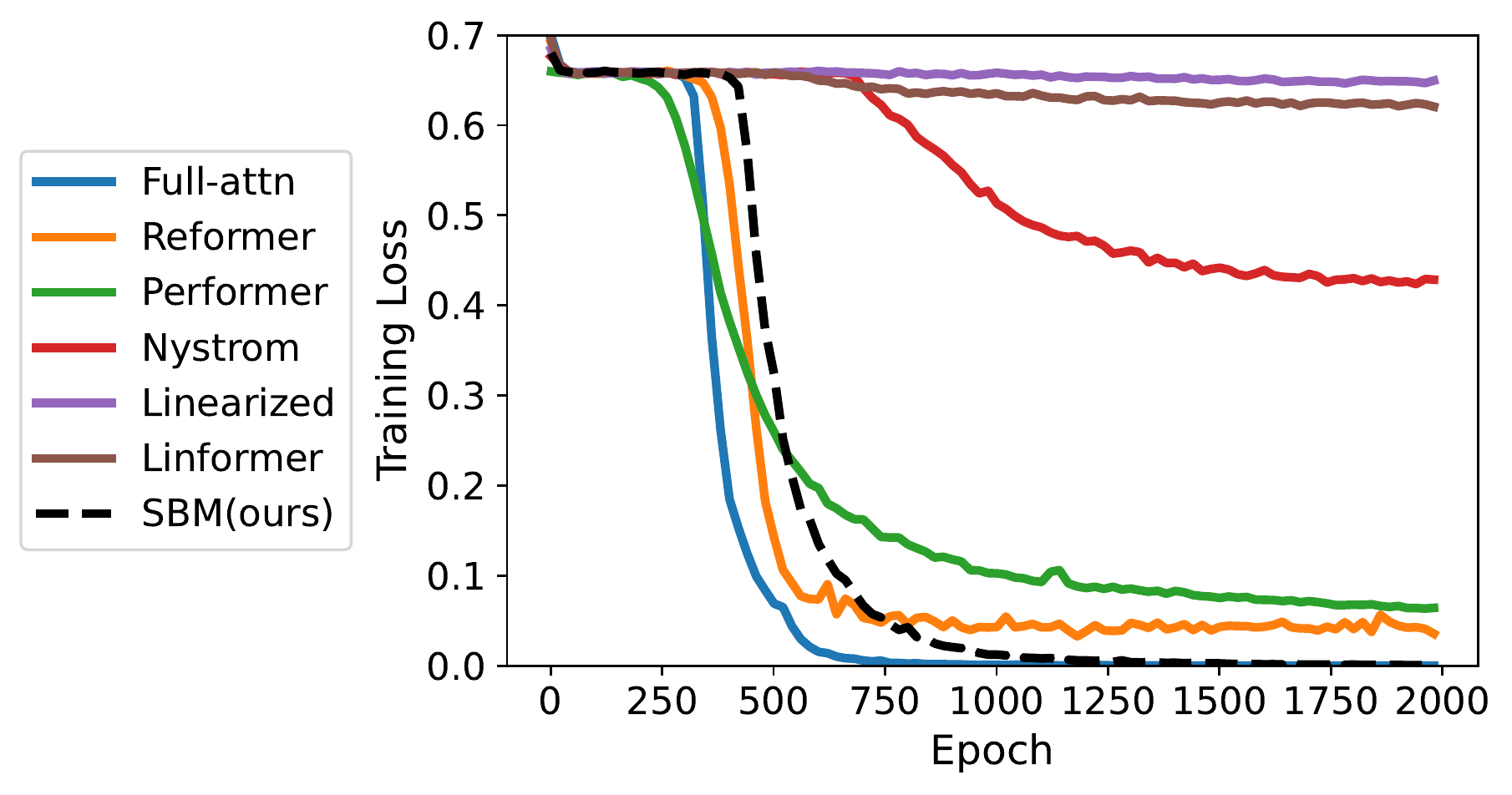}
         %\caption{Training Loss}
     \end{subfigure}
     \hfill
     \begin{subfigure}{0.43\textwidth}
         \centering
         \includegraphics[trim={0mm 0mm 2mm 2mm},clip,width=\textwidth]{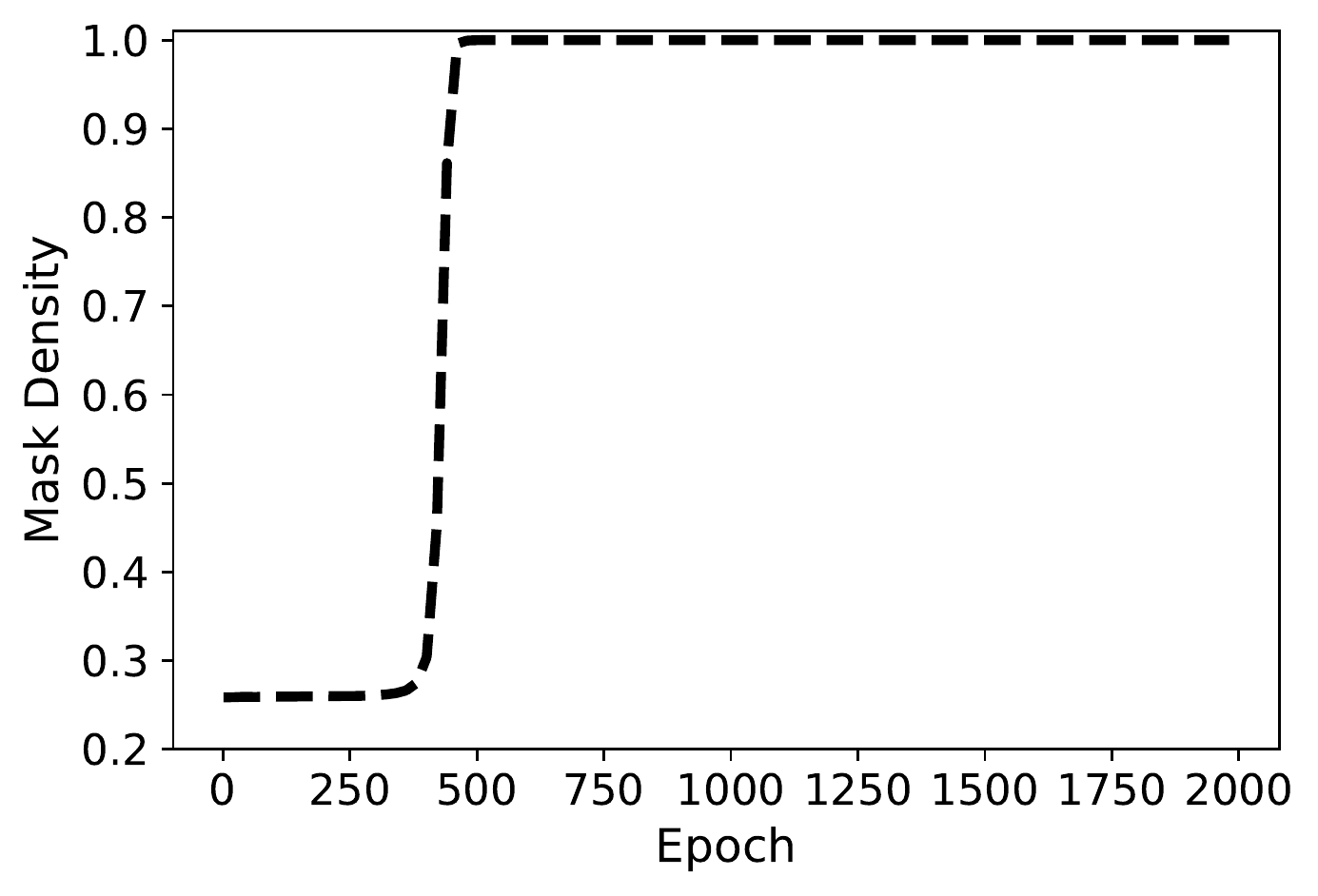}
         %\caption{Sparsity}
     \end{subfigure}
     \vspace{-2mm}
     \caption{Loss (left) and mask density (right) of \modelname{} during training on the synthetic task. \modelname{} successfully converges to zero loss by tuning itself towards full attention.}
     \label{fig:synthetic_result}
     %\vspace{-5mm}
\end{figure}

For empirical evaluations, we first use a synthetic task to show that our model is flexible enough to learn towards full attention when needed in contrast to previous works. We then experiment on Long Range Arena (LRA)~\cite{lra}, a benchmark widely used to assess the capacity of efficient Transformers in learning long-range contexts across different modalities. Lastly, we show results on the GLUE benchmark~\cite{glue} to assess the performance of SBM-Transformer in a downstream NLP setting. All experiments were run on a remote GCP server equipped with 16 NVIDIA A100 Tensor Core GPUs.

%\cutsubsectionup
\subsection{Synthetic Task: Finding Repeated Tokens}\label{sec:synthetic}
%\cutsubsectiondown

\paragraph{Dataset.} We formulate a token-level binary classification task as follows: each input sequence consists of $N$ integers, each of which is uniformly sampled from $\{1,2,\dots,N\}$. We use $N=256$ in our setup. The prediction target is a sequence of equal length, where each token is labeled 1 if there exists a duplicate somewhere within the sequence, and 0 otherwise. Below is a simple example with $N=8$ that illustrates the task. We measure the performance of models via binary cross-entropy loss.
\begin{center}
    Input: \texttt{1 4 3 7 3 2 3 1} $\Rightarrow$ Target: \texttt{1 0 1 0 1 0 1 1}
\end{center}

%\cutparagraphup
\paragraph{Methods.} For this task, we compare \modelname{} with $k=128$ clusters against various efficient Transformers: Linear Transformer \cite{lineartransformer}, Linformer \cite{linformer}, Reformer \cite{reformer}, Performer \cite{performer}, and Nystr\"omformer \cite{nystromformer}. Across all methods, we use a single-layer and single-head architecture with 32 hidden dimensions. Note that due to this constrained setting, the sole head must perform full attention to compare each token to all the others in order to attain 100\% accuracy. All models are trained for 2000 epochs where a new batch of sequences is sampled on-the-fly at each epoch. We use a batch size of 256 and learning rate of 1e-3.

%\cutparagraphup
\paragraph{Results.} Figure~\ref{fig:synthetic_result} shows the training loss curves of each baseline method as well as \modelname{}. Full attention quickly converges to 100\% accuracy, which is expected as it computes all possible pairwise interactions by default. Other models that apply low-rank or kernelized attention fail to achieve the same level of accuracy, due to limited expressibility under the constrained setting. Though \modelname{} converges more slowly compared to full-attention, it demonstrates the ability to drive itself towards full-attention, eventually attaining zero loss.

\cutsubsectionup
\subsection{Long Range Arena (LRA)}
\cutsubsectiondown

To demonstrate that the flexible inductive bias of \modelname{} is effective for modeling long-range dependencies, we test \modelname{} against previous work on the LRA benchmark. We also test how the performance is affected with respect to applying a sparsity-based regularizer.

%\cutparagraphup
\paragraph{Dataset.} LRA~\cite{lra} consists of five different testbeds with varying modalities: \textsc{ListOps}~\cite{listops} is a 10-way classification task to map a sequence of single-digit numbers and 4 different set operations, to its corresponding solution. \textsc{Text}~\cite{text} is a binary classification task where byte-level IMDB movie reviews must be classified into one of positive or negative sentiments. \textsc{Retrieval}~\cite{retrieval} is also a char-level binary classification task, where two sequences from ACL Anthology papers are given as input, and the model must predict whether there exists a citation link between them. \textsc{Image}~\cite{image} is a 10-way classification task mapping flattened pixel-sequences from CIFAR-10 to its class. \textsc{Pathfinder}~\cite{pathfinder} provides flattened pixel-sequences from an image and the model must decide whether two circles in the image are connected by a dashed line. For this benchmark, we use the PyTorch implementation of LRA provided by the authors of Nystr\"omformer~\cite{nystromformer} and adhere to the same train-test splits. Performance in all five tasks is measured using classification accuracy.

%\cutparagraphup
\paragraph{Methods.} We compare \modelname{} against the same baselines as with the synthetic task above. For fair comparison, we set all Transformer models to use the default setting used in~\cite{nystromformer}, which fixes 2 layers, 2 attention heads, and 64 embedding dimensions. For \modelname{}, we use $k=128$ clusters. The output token representations are mean-pooled to obtain the sequence representation for all tasks. More details on the architecture setups can be found in Appendix C.

%\cutparagraphup
\paragraph{Results.} Table~\ref{tab:lra_results} shows the test accuracies of each method. Our \modelname{} achieves the best overall performance, ranking first in two tasks, and second in one other. \modelname{} also outperforms full attention in all five tasks while computing 30\% or less attention scores on average, which supports our claim that masked attention with partial attention score computations can be preferred over full attention depending on the task. With respect to the attention mask structure, we find that flexibility of SBM is indeed beneficial, as Reformer struggles in \textsc{ListOps}, most likely due to the inability of block-diagonal masks to model hierarchical contexts. 
%We conjecture that this is due to the mixed-membership SBM being able to represent a wider span of graph structures compared to the counterpart with hard-clustering.

\begin{table}[t!]
    \centering
    \resizebox{\textwidth}{!}{\begin{tabular}{c|ccccc|c}
        \toprule
        Model & \textsc{ListOps}(2K) & \textsc{Text}(3K) & \textsc{Retrieval}(4K) & \textsc{Image}(1K) & \textsc{Pathfinder}(1K) & Avg. \\
        \midrule
        Full-attention~\cite{transformer} & 37.22 & 64.93 & 79.55 & 40.38 & 74.26 & 59.27\\ 
        \midrule
        Linearized~\cite{lineartransformer} & \underline{37.46} & 64.90 & \bf 81.10 & 38.48 & \underline{74.61} & 59.31\\ 
        Reformer~\cite{reformer} & 22.92 & 64.70 & 77.25 & \bf 43.65 & 70.28 & 55.76\\ 
        Performer~\cite{performer} & 18.25 & 65.00 & 79.01 & 39.80 & 70.79 & 54.57\\ 
        Linformer~\cite{linformer} & \bf 38.44 & 56.28 & 78.09 & 39.53 & 67.62 & 55.99\\ 
        Nystr\"omformer~\cite{nystromformer} & 37.22 & \underline{65.46} & 79.35 & \underline{43.07} & 71.97 & \underline{59.41}\\ 
        \midrule
        \modelname{} (ours) & 37.45 (20.09\%) & \bf 65.79 (26.10\%) & \underline{80.00 (29.46\%)} & 41.31 (20.49\%) & \bf 75.12 (18.56\%) & \bf 59.93\\ 
        \bottomrule
    \end{tabular}
    }
    \vspace{2mm}
    \caption{LRA benchmark results. The sequence lengths are shown next to each task. For \modelname{}, we report the average attention sparsity across all layers and heads during test time in parentheses. Bold and underlined results indicate best and 2nd best test accuracy for each task.}
    \label{tab:lra_results}
\end{table}

\begin{table}[t!]
    \centering
    \resizebox{\textwidth}{!}{\begin{tabular}{c|ccccc|c}
        \toprule
        $\lambda$ & \textsc{ListOps}(2K) & \textsc{Text}(3K) & \textsc{Retrieval}(4K) & \textsc{Image}(1K) & \textsc{Pathfinder}(1K) & Avg. \\
        \midrule
        0 & 37.45 (20.09\%) & \bf 65.79 (26.10\%) & 80.00 (29.46\%) & 41.31 (20.49\%) & 75.12 (18.56\%) & 59.93\\
        $10^{-4}$ & 37.76 (10.48\%) & 65.48 (26.26\%) & 79.93 (24.62\%) & 41.35 (10.70\%) & \bf 75.46 (5.16\%) & \bf 60.00\\ 
        $10^{-3}$ & \bf 38.23 (10.46\%) & 65.18 (26.03\%) & 80.00 (21.70\%) & 41.17 (24.60\%) & 74.49 (3.82\%) & 59.81\\ 
        $10^{-2}$ & 38.20 (2.95\%) & 65.59 (22.43\%) & \bf 80.44 (6.99\%) & \bf 42.20 (3.95\%) & 72.79 (3.76\%) & 59.84\\ 
        $10^{-1}$ & 37.76 (1.15\%) & 64.48 (10.62\%) & 79.46 (2.49\%) & 41.35 (1.33\%) & 73.79 (2.61\%) & 59.37\\ 
        \bottomrule
    \end{tabular}
    }
    \vspace{2mm}
    \caption{LRA results of \modelname{} with increasing sparsity regularization weight $\lambda$. Bold results indicate best accuracy for each task and percentage in parentheses indicate average attention density. Sparsity regularization helps in reducing computational cost with small drop in performance.}
    \vspace{-3mm}
    \label{tab:lra_lambda}
\end{table}

%\cutparagraphup
\paragraph{Mask Density Regularization.}
To test if the model can effectively learn under a constraint on the computational cost, we also test the model under a sparsity-based regularizer that discourages excessive use of query-key edges. We penalize each sampled edge by adding to the predictive loss a weighted regularization term $\lambda\loss_s$, where $\loss_s$ denotes the average mask density across all attention heads. Table~\ref{tab:lra_lambda} shows the performance of \modelname{} across varying regularization weights. Under strong regularization, the model surprisingly retains competitive performance while significantly reducing the average mask density. 
%The model even outperforms the unregularized model when trained under a small regularization weight $\lambda = 10^{-4}$, 
%This indicates that the signal from $\lambda\loss_s$ does not severely interfere with that from prediction loss. 
This indicates that similar local optima are shared across regimes with varying attention density in the loss landscape, and the regularization term is able to drive the model towards finding optimal attention scores with smaller density.

\begin{table}[t!]
    \centering
    \resizebox{\textwidth}{!}{\begin{tabular}{c|ccccc|ccccc}
        \toprule
         & \multicolumn{5}{c}{Relative FLOP Count} & \multicolumn{5}{|c}{Relative Peak Memory Usage}\\
        Model & \textsc{L}(2K) & \textsc{T}(3K) & \textsc{R}(4K) & \textsc{I}(1K) & \textsc{P}(1K) & \textsc{L}(2K) & \textsc{T}(3K) & \textsc{R}(4K) & \textsc{I}(1K) & \textsc{P}(1K) \\
        \midrule
        Full-attention~\cite{transformer} & 1.00 & 1.00 & 1.00 & 1.00 & 1.00 & 1.00 & 1.00 & 1.00 & 1.00 & 1.00 \\ 
        \midrule
        Linearized~\cite{lineartransformer} & 0.02 & 0.01 & 0.02 & 0.04 & 0.04 & 0.18 & 0.16 & 0.12 & 0.42 & 0.42\\ 
        Reformer~\cite{reformer} & 0.05 & 0.03 & 0.05 & 0.10 & 0.10 & 0.39 & 0.31 & 0.18 & 0.72 & 0.72\\  
        Performer~\cite{performer} & 0.18 & 0.12 & 0.18 & 0.36 & 0.36 & 0.76 & 0.70 & 0.60 & 0.96 & 0.96\\ 
        Linformer~\cite{linformer} & 0.33 & 0.22 & 0.33 & 0.66 & 0.66 & 0.26 & 0.22 & 0.14 & 0.34 & 0.34\\ 
        Nystr\"omformer~\cite{nystromformer} & 1.09 & 0.70 & 1.09 & 2.37 & 2.37 & 0.34 & 0.27 & 0.16 & 0.70 & 0.70\\
        \midrule
        \modelname{} (ours) & 0.07 & 0.23 & 0.08 & 0.27 & 0.29 & 0.19 & 1.01 & 0.19 & 0.39 & 0.48\\  
        \bottomrule
    \end{tabular}
    }
    \vspace{2mm}
    \caption{Per-example relative FLOP count and peak memory usage during LRA inference.}
    \label{tab:lra_cost}
\end{table}

\begin{figure}[!t]
    \vspace{-3mm}
    \begin{center}
        \includegraphics[width=0.85\linewidth]{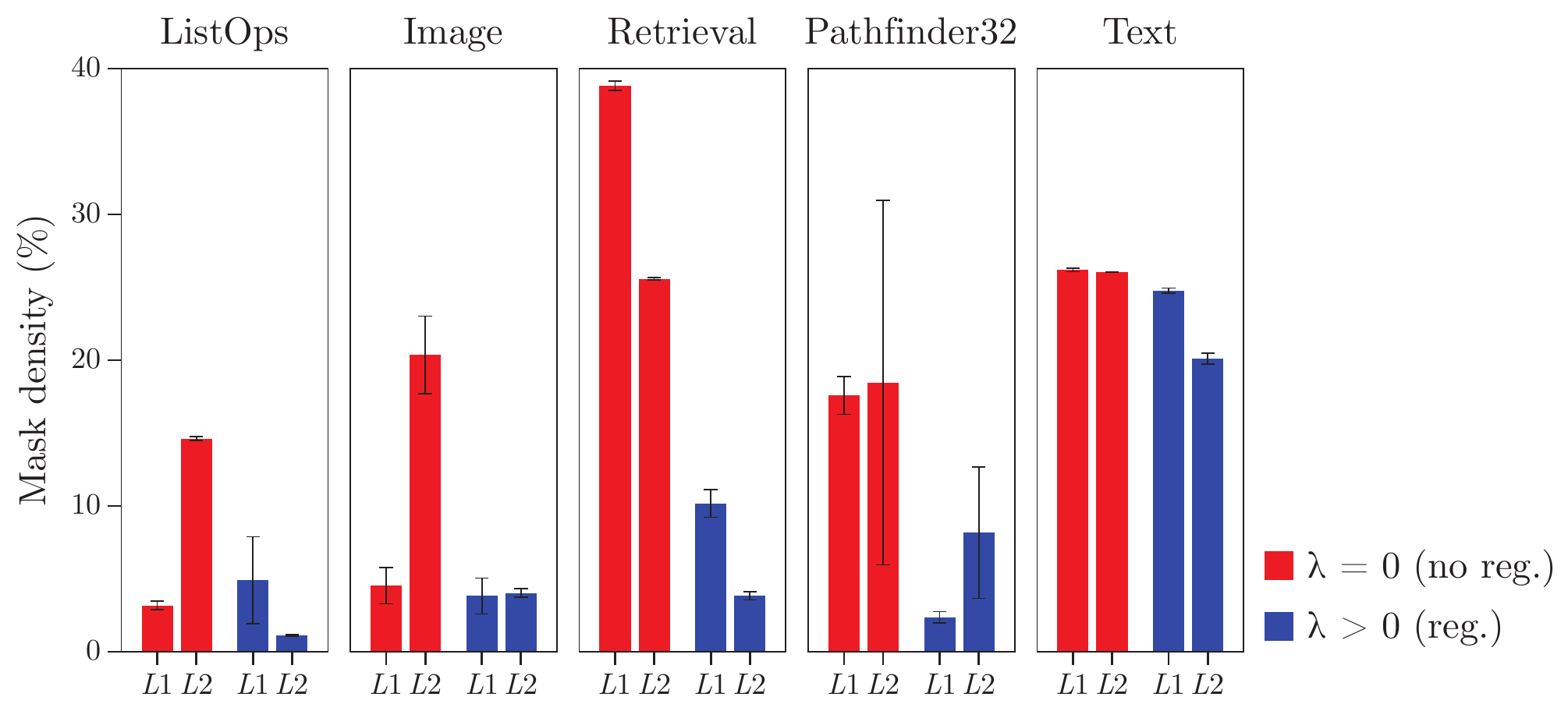}
    \end{center}
    \vspace{-3mm}
    \caption{Average and standard deviation of density of masks sampled across the test set for each LRA task. The $x$-axis indicates the lower (L1) and upper (L2) layers and each bar represents the density averaged between the two attention heads in each layer.
    }
     \label{fig:diversity}
\end{figure}

%\cutparagraphup
\paragraph{Efficiency.}
Furthermore, we compare computational costs during inference by measuring FLOP count and peak memory usage. For SBM-Transformer, we test the model trained under $\lambda=10^{-1}$. Due to lack of support for sparse tensor operations in existing FLOP-counters, we measure FLOP counts by manually enumerating through each tensor operation. Table~\ref{tab:lra_cost} shows that SBM-Transformer is comparably efficient across all tasks except for \textsc{Text}, where SBM-Transformer showed the largest average mask density. Note that while the cost of other baselines are fixed after initialization, the cost of SBM-Transformer is data-adaptive and can vary input-by-input. Further analysis and qualitative examples demonstrating the input-dependent attention mask densities can be found in Appendix C.

%\cutparagraphup
\paragraph{Layerwise Diversity in Sparsity.}

We also compare the densities of masks sampled at each layer of \modelname{} during test time to examine whether our model is capable of diversifying sparsity across layers for better performance. Recall that this allows models to gather information in different levels, as seen in pretrained BERT where lower layers focus on the overall content via dense attention while upper layers gather syntactic information with tree-like patterns~\cite{clark2019}. For each of the five tasks, we pick two highest-performing models (one for unregularized and another for regularized) for measurement. Figure~\ref{fig:diversity} shows the average layer-wise mask densities of unregularized and regularized \modelname{}s across different tasks. We find that under no regularization, the two layers can differ by more than 10\% in tasks such as \textsc{ListOps} and \textsc{Image}. This may be due to the hierarchical and compositional structure of the two tasks. We also find that the variation is relatively low in \textsc{Text} with densities around 25\%, indicating that the task requires broad attention overall. Lastly, the standard deviation is extremely large in upper layers for \textsc{Pathfinder}, showing that it samples a wide variety of masks depending on the input.

%\sungjun{As previously mentioned, our model allows easy regularization towards using less memory, but adding a sparsity-based regularization term based on the number of edges used. A simple weighted regularization term may not be sufficient in a practical setting, since the final sparsity is not observable until after training. Therefore, a more practical approach would be to add a thresholded sparsity penalty, such that when the overall sparsity exceeds a user-defined threshold, a large loss is applied. We test this setting in the LRA to see whether the model can learn under such constraints.}

\iffalse
\begin{wraptable}{r}{0.5\textwidth}
    \centering
    \resizebox{0.5\textwidth}{!}{\begin{tabular}{c|cccc|c}
        \toprule
        Model & \textsc{SST-2} & \textsc{QNLI} & \textsc{QQP} & \textsc{MNLI} & Avg. \\
        \midrule
        Full-attention~\cite{transformer} & \textbf{89.8} & \textbf{85.0} & 84.7 & 84.0 & \textbf{85.9} \\ 
        \midrule
        Reformer~\cite{reformer} & 89.3 & 84.0 & 84.4 & 83.9 & 85.4 \\ 
        Performer~\cite{performer} & 82.0 & 59.3 & 65.6 & 71.4 & 69.6 \\ 
        Linformer~\cite{linformer} & 82.0 & 82.5 & 83.2 & 79.3 & 81.8 \\ 
        Nystr\"omformer~\cite{nystromformer} & 89.7 & 84.9 & 83.2 & \textbf{84.1} & 85.5 \\ 
        \midrule
        \modelname{} (ours) & \textbf{89.8} & 83.6 & \textbf{85.2} & 83.5 & 85.5 \\ 
        \bottomrule
    \end{tabular}
    }
    \caption{GLUE benchmark results. Bold results indicate best accuracy for each task.}
    \label{tab:glue_results}
\end{wraptable}
\fi

\cutsubsectionup
\subsection{General Language Understanding Evaluation (GLUE)}
\cutsubsectiondown
To check whether its strong performance demonstrated in LRA extends to the downstream NLP setting as well, we evaluate SBM-Transformer against baselines on the GLUE benchmark~\cite{glue}.

\cutparagraphup
\paragraph{Dataset.} 
We consider four NLP tasks in GLUE~\cite{glue}. \textsc{SST-2}~\cite{sst-2} consists of movie reviews the model must predict their positive or negative sentiments. For \textsc{QQP}~\cite{qqp}, the task is to determine whether one question is a paraphrase of the other given a pair of questions. \textsc{MNLI}~\cite{mnli} consists of sentence pairs, each with a target label indicating whether the two sentences are connected through entailment, contradiction, or neither. \textsc{QNLI}~\cite{qnli} consists of sentence-question pairs and the task is to determine whether the sentence contains an answer to the question. Each task is formulated as sequence classification, and we measure performance by F1 score on the respective validation sets.

\cutparagraphup
\paragraph{Methods.} 
Following previous work~\cite{nystromformer}, we arrange a small variant of BERT~\cite{bert} with 4 layers, 8 attention heads, and 512 embedding dimensions. We replace full attention with each attention module used in previous experiments. For SBM-Transformer, we use $k=128$ clusters without sparsity regularization (i.e. $\lambda = 0$). Here, we find that adding local attention significantly boosts performance, and thus fix a sliding window of size 64 to SBM-Transformer. We first pretrain each model under the masked language modeling objective for 50 epochs on a corpus with text from English Wikipedia, BookCorpus~\cite{bookcorpus}, and RealNews~\cite{realnews}. We then finetune each pretrained model for 5 epochs on the GLUE training sets. More details on the architecture and training setup can be found in Appendix C.

\begin{wraptable}{r}{0.5\textwidth}
    \vspace{-4mm}
    \centering
    \resizebox{0.5\textwidth}{!}{\begin{tabular}{c|cccc}
        \toprule
        Model & \textsc{SST-2} & \textsc{QQP} & \textsc{MNLI} & \textsc{QNLI}\\
        \midrule
        Full-attention~\cite{transformer} & \textbf{89.8} & 84.7 & 84.0 & \textbf{85.0}\\ 
        \midrule
        Reformer~\cite{reformer} & 89.3 & 84.4 & 83.9 & 84.0 \\ 
        Performer~\cite{performer} & 82.0 & 65.6 & 71.4 & 59.3\\ 
        Linformer~\cite{linformer} & 82.0 & 83.2 & 79.3 & 82.5\\ 
        Nystr\"omformer~\cite{nystromformer} & 89.7 & 83.2 & \textbf{84.1} & 84.9\\ 
        \midrule
        \modelname{} (ours) & \textbf{89.8} & \textbf{85.2} & 83.5 & 83.6\\ 
        \bottomrule
    \end{tabular}
    }
    \caption{GLUE benchmark results. Bold results indicate best accuracy for each task.}
    \label{tab:glue_results}
    \vspace{-5mm}
\end{wraptable}

\paragraph{Results.}
Table~\ref{tab:glue_results} reports the F1 scores of each method on different NLP tasks. SBM-Transformer performs competitively against full attention overall, and outperforms all baselines in \textsc{SST-2} and \textsc{QQP}. We also find that the fine-tuned SBM-Transformer models use 13.5\% dense attention masks on average across all tasks, showing that the model can encode useful information from input sentences effectively under highly sparse attention. 

\section{Conclusion}\label{sec:conclusion}
%\cutsectiondown
We propose \modelname{}, an efficient Transformer that can data-adaptively choose its attention sparsity between sparse and full attention without the need to explicitly compute the full attention score matrix. Theoretically, we show that our model enjoys the same expressibility as the original Transformer due to the flexibility of the latent SBM. Empirical experiments on LRA and GLUE show that our model performs competitively against previous state-of-the-art efficient Transformers.

\iffalse
\begin{figure}[!t]
     \centering
     \begin{subfigure}{0.49\textwidth}
         \centering
         \includegraphics[width=\textwidth,height=40mm]{example-image-a}
         \caption{\textsc{Image}}
     \end{subfigure}
     \hfill
     \begin{subfigure}{0.49\textwidth}
         \centering
         \includegraphics[width=\textwidth,height=40mm]{example-image-a}
         \caption{\textsc{Pathfinder}}
     \end{subfigure}
     \caption{Example clustering sparsity patterns in pathfinder}
     \label{fig:sparsity_pattern}
\end{figure}
\fi

Nonetheless, there are limitations due to sparse tensor operations being less optimized on GPU kernels. In the LRA experiments, we found that SBM-Transformer can result in longer runtimes compared to dense counterparts while its memory usage is much lower. While previous sparsity-based attention mechanisms with block-sparse attention are much more amenable for GPU computation~\cite{bigbird,sparsetransformer,longformer}, our work requires an architecture with better workload balancing and acceleration under unstructured sparsity, for which there is ongoing work~\cite{sparsert,latency-aware}.
%We do observe faster runtimes with our sparse implementation, but only when density is extremely small (below 10\% avg.). 
%We still believe this work is valuable as it enables multi-head attention with long sequences, and practitioners can easily impose constraints on the number of edges to sample, based on the computational resource available in their environment.

We still believe this work is valuable as it is the first approach to induce per-example attention sparsity, allowing the model to adjust its computational cost based on the input. The cost being dependent on the number of edges also allows practitioners to easily impose constraints based on the available computational resources.
We hope to see more GPU-friendly tensor operations optimized for fine-grained sparsity in the future, at which point the value of this work will increase even further. As we propose a foundational replacement for the scaled dot-product attention module in the Transformer architecture, we do not expect any immediate negative societal impact due to this work.

%For future work, we hope to extend this work further towards a higher-level Neural Architecture Search (NAS) in the context of Transformers where the model not only flexibly tunes the attention sparsity, but also adjusts the number of layers and attention heads during training to autonomously optimize the tradeoff between model size and predictive performance.  
%As we propose a foundational replacement for the scaled dot-product attention module in the Transformer architecture, we do not expect any immediate negative societal impact due to this work.

%Some work related to one-shot architecture search include 
%One line of work discusses the {\it Lottery Ticket Hypothesis}~\cite{lotteryticket}, which states that neural networks contain sparse subnetworks that can be trained effectively when reset to the same initialization. 
%\sw{It seems this paragraph is incomplete.}
%\sungjun{Need better connection to network pruning literature.}

\begin{ack}
We would like to thank Kun Dong for the insightful comments. This work was supported by Institute of Information \& communications
Technology Planning \& Evaluation (IITP) grant funded by the Korea government (MSIT) (No. 2022-0-00926, 2022-0-00959, 2021-0-02068, and 2019-0-00075).
\end{ack}

\bibliographystyle{abbrv}
\bibliography{neurips_2022}

%%%%%%%%%%%%%%%%%%%%%%%%%%%%%%%%%%%%%%%%%%%%%%%%%%%%%%%%%%%%

\newpage
\appendix
\section{Proof of Theorem 1}

Here we provide a detailed proof to show that \modelname{} is a universal approximator of arbitrary sequence-to-sequence functions. Note that a trivial solution is to use a dense mask $\BM$ equal to the all-one matrix with rank 1, in which case \modelname{} becomes equivalent to the full attention Transformer~\cite{transformer} that is already known to achieve universal approximability~\cite{yun2019}. Instead, we show that there also exists a solution with $\calO(n)$ connections, leveraging previous analyses under sparse attention by Yun~et~al.~(2020)~\cite{yun2020} and Zaheer~et~al.~(2020)~\cite{bigbird}.

For theoretical analysis, we consider a variant of \modelname{} that manually adds self-loops in the bipartite graph such that $\BM_{ii} = 1$ for all $i$. While adding in self-loops help towards analyzing expressibility, we find that it does not help empirically, and hence omit the modification in the our main method during experimentation. A comparison on performance on the LRA benchmark can be found below in Appendix~\ref{app:results}.

Here we restate the necessary conditions from~\cite{yun2020}. Let $\mathcal{A}_i^l \subseteq [n]$ denote the sparsity pattern of $i$-th token in the $l$ attention pattern: $j \in \mathcal{A}_i^l$ if query $i$ attends to key $j$ in the $l$-th pattern. Then, the main theorem of Yun~et~al.,(2020)~\cite{yun2020} states that as long as the set of $p$ sparsity patterns $\{\mathcal{A}_i^l\}_{l=1}^p$ and the probability mapping $\rho$ ({\it e.g.}, softmax) of the sparse Transformer model satisfy the two assumptions below, then model achieves universal approximability with finite number of layers.

\begin{assumption}\label{ass:pattern}
The sparsity patterns $\{\mathcal{A}_i^l\}$ satisfy the following:
\begin{enumerate}
    \item For all $i \in [n]$ and $l \in [p]$, we have $i \in \mathcal{A}_i^l$
    \item There exists a permutation $\gamma: [n] \to [n]$ such that, for all $i \in [n-1]$, $\gamma(i) \in \cup_{l=1}^p \mathcal{A}_{\gamma(i+1)}^l$.
    \item There exists a finite $s \in \mathbb{N}$ such that $s = \min\{u \;|\; \mathcal{S}_i^u = [n] \text{ for all } i \in [n]\}$ where $\mathcal{S}_i^u$ is defined recursively by $\mathcal{S}_i^1 \coloneqq \mathcal{A}_k^1$ and $\mathcal{S}_i^t \coloneqq \wbigcup_{j \in \mathcal{A}_i^{(t-1)\text{ mod }p+1}} \mathcal{S}_j^{t-1}$.
\end{enumerate}
\end{assumption}
%\sungjun{Explain what $\mathcal{S}$ is.}

\begin{assumption}\label{ass:softmax}
For any $\zeta > 0$ and $\eta \in (0,1]$, $\exists t>0$ such that, for any column input $\Bv$ satisfying $v_{j^*}-\max_{j\neq j^*} v_j \geq \zeta$ (where $j^* = \arg\max_j v_j$), we have $\rho[t\Bv]_{j^*} \geq 1 - \eta$ and $\sum_{j \neq j^*} \rho[t\Bv]_j \leq \eta$
\end{assumption}

When viewing each attention pattern $\mathcal{A}^l$ as a directed graph $G^l = (V, E^l)$ with node set $V\coloneqq [n]$ and edge set $E^l\coloneqq\{(j,i)|j\in\mathcal{A}^l_i \;\forall i,j\}$, each item in Assumption~\ref{ass:pattern} can be equivalently written as
\begin{enumerate}[label={\it Condition} \arabic*:,leftmargin=*,align=left]
    \item For all directed graphs $G^l$, each node has a self-loop.
    \item The aggregation of all $p$ graphs $G^* \coloneqq (V, \cup_{l=1}^p E^l)$ has a Hamiltonian path that spans all $n$ nodes.
    \item In a finite aggregation of $s$ graphs $G^{*s} \coloneqq (V, \cup_{l=1}^s E^l)$, there exists a path between all possible pairs of nodes.
\end{enumerate}

Because we use the same softmax probability mapping, which is already proven to satisfy Assumption~\ref{ass:softmax} in~\cite{yun2020}, we are left to show that there exists a parameterization of \modelname{} such that the expected attention mask patterns together satisfy the three conditions above.
%and 2) the masked softmax operator $\sigma_{\BM}(\cdot)$ satisfies Assumption~\ref{ass:softmax}.
To do so, we first show that a simple random ER-graph~\cite{bollobas1998random} can be expected to have at least one Hamiltonian cycle with expected number of edges linear in the sequence length.

\begin{lemma}\label{lem:hamiltonian}
Assume a directed Erd\H{o}s-R\'enyi random graph $G(n,p)$ where each directed edge exists with probability $p$. Then, for any number of nodes $n$, there exists a probability $p$ such that the expected number of edges is $\calO(n)$ and the expected number of Hamiltonian cycles in $G(n,p)$ is greater than or equal to 1.
\end{lemma}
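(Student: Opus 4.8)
The plan is a one-line first-moment computation, so the argument is short. First I would count the objects of interest: on the labeled vertex set $[n]$, a directed Hamiltonian cycle can be encoded as a cyclic sequence $v_1 \to v_2 \to \cdots \to v_n \to v_1$, and since each such cycle corresponds to exactly $n$ of the $n!$ vertex orderings (its cyclic rotations), there are precisely $(n-1)!$ distinct directed Hamiltonian cycles, each using exactly $n$ directed edges. Because the $n(n-1)$ possible directed edges of $G(n,p)$ are present independently with probability $p$, a fixed Hamiltonian cycle is a subgraph of $G(n,p)$ with probability $p^n$. Writing $X$ for the number of Hamiltonian cycles present, linearity of expectation gives $\E[X] = (n-1)!\,p^n$.

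Next I would pick $p$ to push $\E[X]$ above $1$ while keeping the edge count linear. Taking $p := \big((n-1)!\big)^{-1/n}$ is legitimate since $(n-1)! \ge 1$ forces $p \le 1$, and it yields $\E[X] = 1 \ge 1$ for every $n$. It then remains only to bound $\E[|E|] = n(n-1)\,p$, i.e.\ to show $p = \calO(1/n)$. For this I would invoke the elementary Stirling lower bound $(n-1)! \ge \big(\tfrac{n-1}{e}\big)^{n-1}$ (valid for $n \ge 3$), which gives $p \le \big(\tfrac{e}{n-1}\big)^{(n-1)/n}$; since the exponent $(n-1)/n$ lies in $[\tfrac12,1)$ and tends to $1$, this is $\calO(1/n)$, hence $\E[|E|] = n(n-1)\,p = \calO(n)$, which is exactly the claim. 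The degenerate cases $n \in \{1,2\}$ are handled directly: there $(n-1)! = 1$, so $p = 1$ and the edge count is $0$ or $2$, still $\calO(n)$.

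There is essentially no obstacle here; the only thing requiring a moment of care is the asymptotic bookkeeping in the final step, namely confirming that the factor $\big((n-1)!\big)^{-1/n}$ is genuinely $\Theta(1/n)$ rather than some $n^{-1\pm o(1)}$ that would spoil the $\calO(n)$ edge bound. A cosmetically cleaner alternative, which I might present instead, is to fix $p = c/n$ for any constant $c > e$ and use Stirling in the form $(n-1)! \sim \sqrt{2\pi/n}\,(n/e)^n$ to conclude $\E[X] \sim \sqrt{2\pi/n}\,(c/e)^n \to \infty$, so that $\E[X] \ge 1$ for all large $n$ (verifying the finitely many small $n$ by hand), while $\E[|E|] = (n-1)c = \calO(n)$ is then immediate.
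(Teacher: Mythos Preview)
Your proposal is correct and follows essentially the same approach as the paper: both compute $\E[X]=(n-1)!\,p^n$ by linearity of expectation, then choose $p$ of order $1/n$ via a Stirling-type bound so that $\E[X]\ge 1$ while the expected edge count stays linear. The only differences are cosmetic---the paper solves $\frac{p^n}{n}(n/e)^n=1$ to get $p=\frac{e}{n}n^{1/n}$, whereas you set $p=((n-1)!)^{-1/n}$ (or alternatively $p=c/n$ with $c>e$)---and your asymptotic bookkeeping and treatment of small $n$ are, if anything, slightly more careful than the paper's.
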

\begin{proof}
We start the proof by formulating the expected number of Hamiltonian cycles in $G(n,p)$. Assuming directed edges, there exist $(n-1)!$ permutations, each of which represent different possible Hamiltonian cycles. Say we have $(n-1)!$ random variables $\{X_i\}_i^{(n-1)!}$ where each $X_i$ equals 1 when the corresponding Hamiltonian cycle exists in $G$, 0 otherwise. By linearity of expectation, the expected number of Hamiltonian cycles equals $\sum_{i=1}^{(n-1)!} \E[X_i]$. Then, note the probability of $X_i = 1$ equals $p^n$ for all $i$ since we require $n$ directed edges to exist for each cycle. Therefore, the total expected number of Hamiltonian cycles equals $\sum_{i=1}^{(n-1)!} \E[X_i] = p^n (n-1)!$.

Next, we show that $\sum_{i=1}^{(n-1)!} \E[X_i] \geq 1$ if $p = f(n)$ where $f(n) = \calO(\frac{1}{n})$. Starting from $\sum_{i=1}^{(n-1)!} \E[X_i] = p^n (n-1)!$, using the inequality $n! \geq (n/e)^n$ leads to
\begin{align*}
    p^n (n-1)! =\dfrac{p^n}{n}n! \geq \dfrac{p^n}{n} \left(\dfrac{n}{e}\right)^{n}
\end{align*}
Then, setting the RHS equal to 1 leads to
\begin{align*}
    \dfrac{p^n}{n} \left(\dfrac{n}{e}\right)^{n} = 1 
    \Leftrightarrow n \ln p + n \ln \dfrac{n}{e} = \ln n \Leftrightarrow \ln p = \ln \dfrac{e}{n} + \ln n^{\frac{1}{n}} \Leftrightarrow p = \dfrac{e}{n} n^{\frac{1}{n}}
\end{align*}
For large $n$, $\frac{1}{n}$ dominates $n^{\frac{1}{n}}$ and thus, the expected number of Hamiltonian cycle is larger than or equal to 1 with expected number of edges $n^2 p = \calO(n)$ 
\end{proof}

\begin{lemma}\label{lem:pattern}
There exists a parameterization of \modelname{} such that the sparsity patterns induced by the expected attention masks satisfy Assumption~\ref{ass:pattern}.
\end{lemma}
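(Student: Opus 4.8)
The plan is to invoke the main theorem of Yun~et~al.\ by exhibiting one explicit parameterization — a choice of the cluster embeddings $\BC$, the shared $\text{MLP}_{d_h\to d_h}$, and the projections $\BW^Q,\BW^K$ for a small number $p$ of heads — and then verifying, for the supports of the expected masks $\E[\BM^l]=\BQhat^l\BShat^l(\BKhat^l)^T$, $l\in[p]$, the three conditions that follow Assumption~\ref{ass:pattern}. Condition~1 comes for free: the self-loop modification forces $\BM_{ii}=1$, hence $i\in\mathcal A_i^l$ for every token and every head. So the whole content is to choose the SBM parameters so that the union of the induced patterns contains a Hamiltonian path (Condition~2) and its finite aggregation is strongly connected (Condition~3), ideally with only $\calO(n)$ nonzero edge probabilities.

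For Conditions~2 and~3 I would realize $p=3$ patterns with three heads, following the sketch in the main text. The first head is tuned so that the cluster embeddings are mutually far apart while each node embedding is driven toward its own cluster's embedding; then $\BShat=\texttt{softmax}(\BC\BC^T)$ is essentially the identity and every row of $\BQhat$ and $\BKhat$ is essentially a one-hot vector, so the expected mask $\BQhat\BShat\BKhat^T$ is block-diagonal and hard-partitions $[n]$ into $k$ balanced clusters, each block a clique. The other two heads are given rank-$\le 2$ memberships and block matrices so that their expected masks become the ``token$\,\to\,$relay'' and ``relay$\,\to\,$token'' halves of a star on $k$ designated relay tokens, one per cluster. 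A Hamiltonian path then follows by induction on clusters: inside cluster $c$ the clique lets us thread a path through all of its tokens ending at the relay $r_c$, and a ``relay$\,\to\,$token'' star edge from $r_c$ carries us into cluster $c+1$; concatenating $c=1,\dots,k$ visits every token exactly once, giving Condition~2. For Condition~3, from any token $u$ we reach its relay in one step and then any other token in one more, so the aggregate is strongly connected and the parameter $s$ in Assumption~\ref{ass:pattern} is a small constant. To additionally keep the connection count linear — the $\calO(n)$ claim accompanying Theorem~\ref{thm:expressive} — I would replace the dense within-cluster blocks by sparse Erd\H{o}s--R\'enyi samples, whose Hamiltonicity inside each cluster is guaranteed in expectation by Lemma~\ref{lem:hamiltonian}, so that the within-cluster edges total $k\cdot\calO(n/k)=\calO(n)$.

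The step I expect to be the main obstacle is realizing these sparse supports \emph{exactly}: $\BShat=\texttt{softmax}(\BC\BC^T)$ is strictly positive and $\BQhat,\BKhat$ are strictly positive sigmoid outputs, so $\E[\BM]=\BQhat\BShat\BKhat^T$ can never have a true zero entry, and both the block-diagonal partition and the star require ``off-pattern'' probabilities to vanish. I would deal with this by working in an extreme regime of the embeddings in which every prescribed edge has probability at least $1-\gamma$ and every off-pattern edge at most $\gamma$, so that the mask actually drawn by \texttt{fastRG} contains the prescribed $\calO(n)$-edge subgraph with probability at least $1-\delta$ (a union bound over the prescribed edges, after boosting their Poisson intensities); the residual $\delta$-event contributes only an $\calO(\delta)$ term to $\E[g(\BX)]$ because attention outputs are uniformly bounded on the compact domain $\mathbb{D}$, and this is absorbed into $\epsilon$ when Theorem~\ref{thm:expressive} applies the sparse-attention construction on the good event. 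The remaining work — writing down the exact $\BC$ and MLP, checking the cluster-to-cluster bookkeeping of the Hamiltonian path, and bounding the edge count and the reachability parameter $s$ — is routine and I would defer it to the detailed proof.
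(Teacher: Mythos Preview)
Your proposal is correct and follows essentially the same route as the paper: three SBM heads realizing a block-diagonal partition plus the two directed halves of a $k$-relay star, with Lemma~\ref{lem:hamiltonian} supplying within-block Hamiltonicity at $\calO(n)$ edges and the relays giving both the inter-cluster hops for Condition~2 and the short connectivity for Condition~3. The paper's version differs only in bookkeeping---it designates the last $k$ tokens as the relays rather than one inside each cluster---and it simply asserts the three patterns are ``easily representable from separate SBMs,'' glossing over the strict-positivity issue you raise and handle via the extreme-regime/$\gamma$-$\delta$ argument.
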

\begin{proof}
Here we show that a finite number of attention patterns each representable by the SBM given some number of clusters $k$ achieves the three conditions from Assumption~\ref{ass:pattern}. Here we use $p=3$ attention patterns together (shown in Figure~\ref{fig:approx_patterns}):
\begin{align*}
    \mathcal{A}^1_i &= \{i\} \cup \left\{j : \floor*{\frac{ik}{n}} = \floor*{\frac{jk}{n}} \;\forall j \in [n]\right\} \text{ for all } i\in [n] \\
    \mathcal{A}^2_i &= 
    \{i\}\cup\{n-k+1,\dots,n-1,n\} \text{ for all } i\in [n]\\
    \mathcal{A}^3_i &= 
    \begin{cases}
    \{i\} &\text{ if } i \leq n-k\\
    [n] &\text{ if } i > n-k\\
    \end{cases}
\end{align*}
\vspace{-3mm}

\begin{figure}[!t]
     \centering
     \begin{subfigure}{0.32\textwidth}
         \centering
         \includegraphics[trim={0mm 21mm 176mm 0mm},clip,width=.7\textwidth]{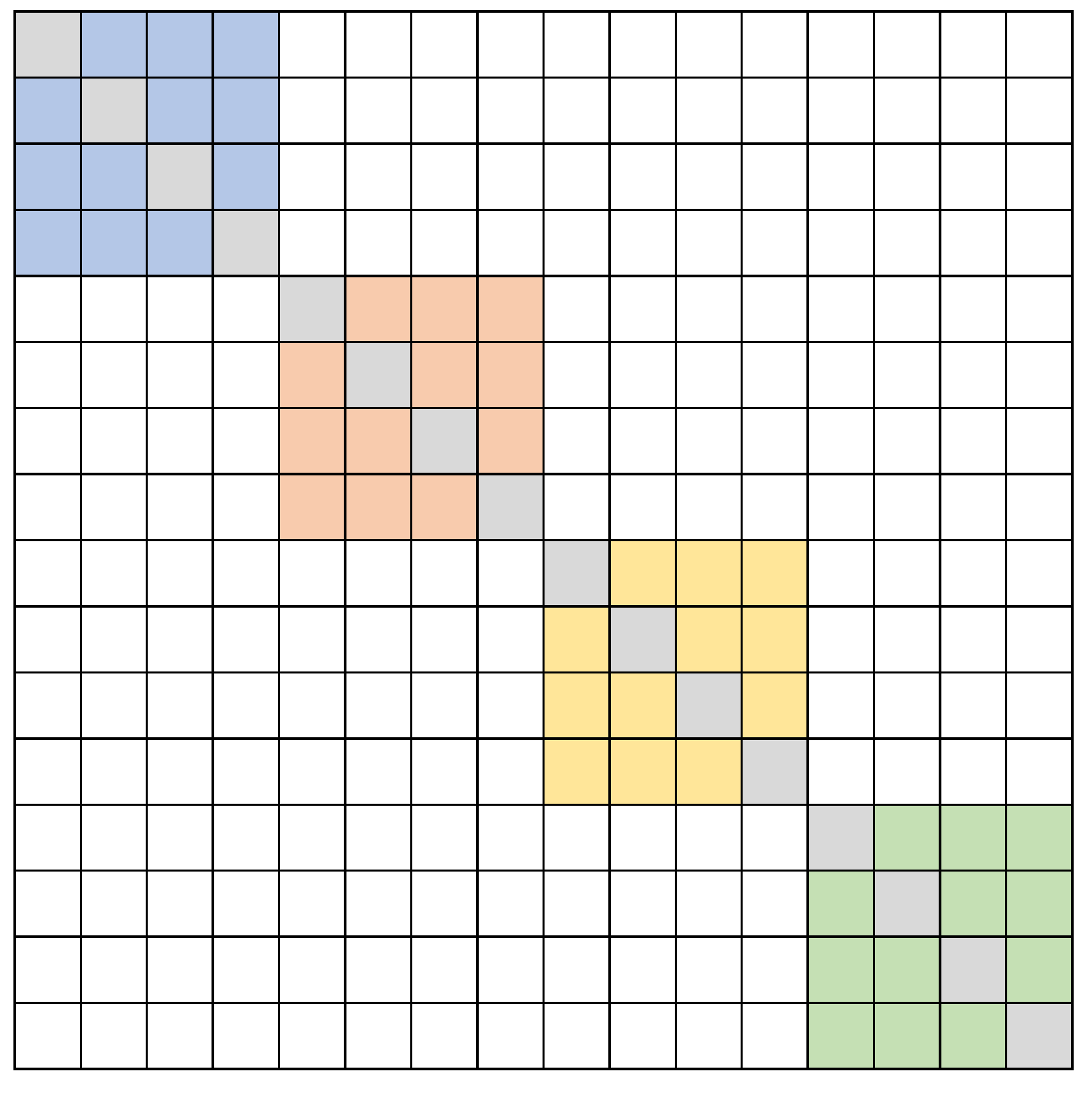}
         \caption{$\mathcal{A}^1$}
     \end{subfigure}
     \hfill
     \begin{subfigure}{0.32\textwidth}
         \centering
         \includegraphics[trim={0mm 21mm 176mm 0mm},clip,width=.7\textwidth]{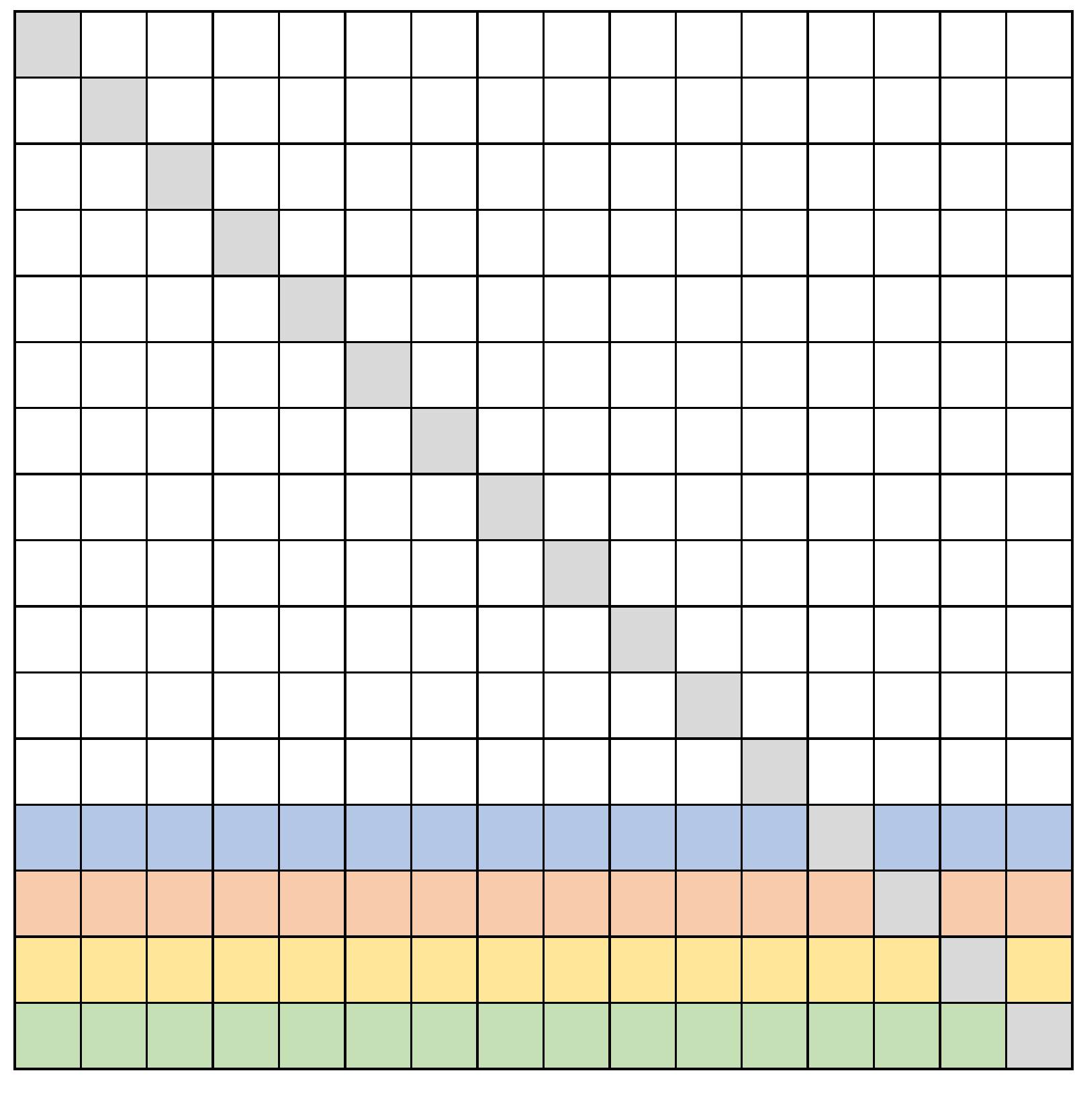}
         \caption{$\mathcal{A}^2$}
     \end{subfigure}
     \hfill
     \begin{subfigure}{0.32\textwidth}
         \centering
         \includegraphics[trim={0mm 21mm 176mm 0mm},clip,width=.7\textwidth]{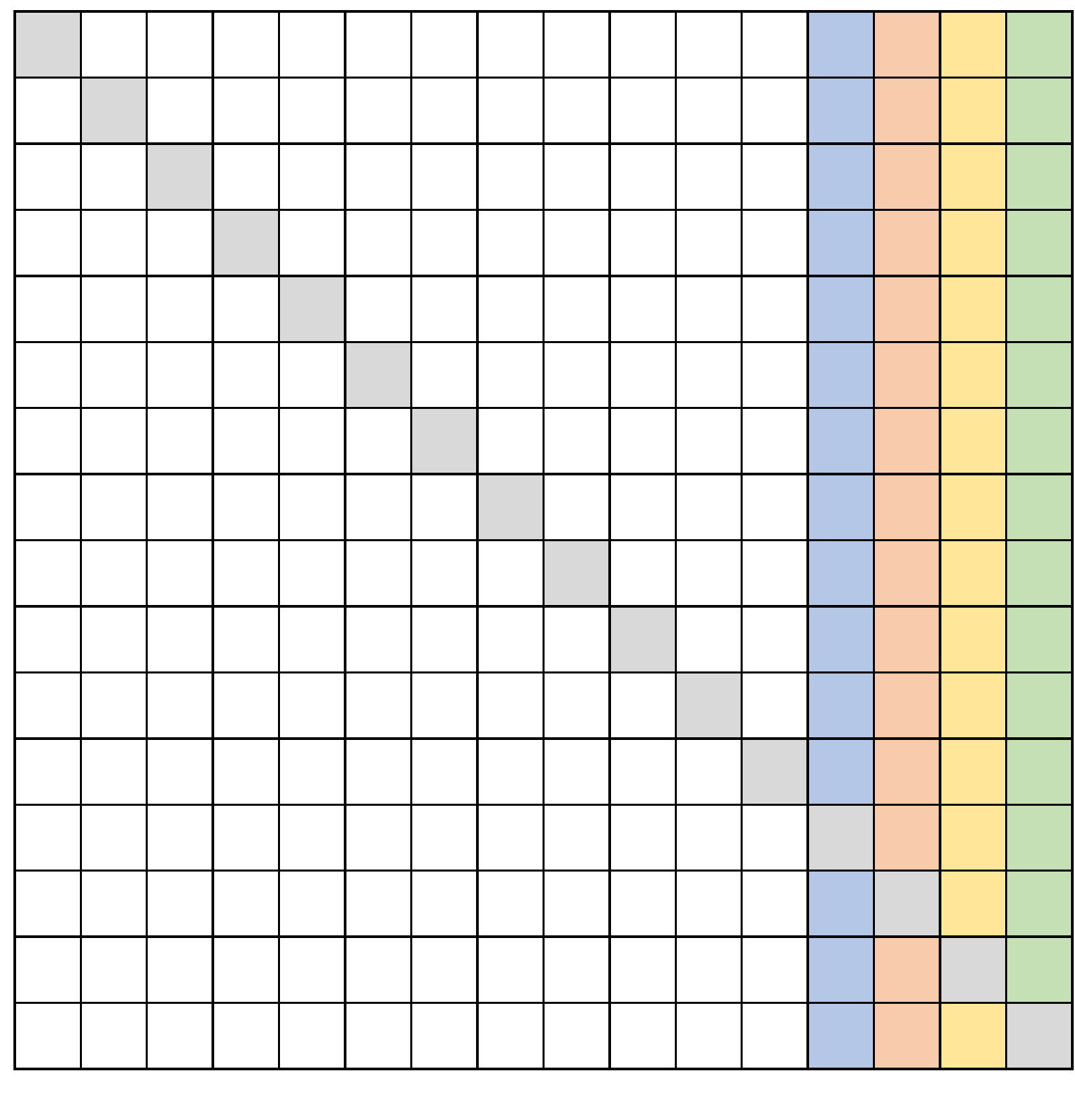}
         \caption{$\mathcal{A}^3$}
     \end{subfigure}
     \caption{Three sparsity patterns with $n=16$ and $k=4$. Grey-colored blocks on the diagonal indicate manually added self-loops. Any other color indicates a cluster.}
     \label{fig:approx_patterns}
     \vspace{-4mm}
\end{figure}

Intuitively speaking, $\mathcal{A}^1$ clusters all tokens into non-overlapping $k$ clusters, each with size $\frac{n}{k}$, and connects tokens together if they are within the same cluster. The other two patterns $\mathcal{A}^2$ and $\mathcal{A}^3$ adds $k$ global relay tokens for each cluster with edges going from and to all $n$ nodes, respectively. Note that all three patterns are easily representable from separate SBMs.

Then, we can show that these three patterns form directed graphs that together satisfy the three required conditions. Condition 1 is easily satisfied due to the manually added self-loops in all patterns. Condition 3 is also satisfied with $s=3$ as we have $k$ global relay tokens in both directions ($\mathcal{A}^2$ and $\mathcal{A}^3)$, connecting all pairs of tokens indirectly or directly. Lastly, Condition 2 can be satisfied by leveraging Lemma~\ref{lem:hamiltonian} and the global $k$ relay tokens: Lemma~\ref{lem:hamiltonian} states that each subgraph induced by each individual cluster in $\mathcal{A}^1$ has at least one Hamiltonian cycle with $\calO(n)$ number of edges in expectation. Then, a global Hamiltonian path can in $G^*$ can be constructed as follows:
\begin{itemize}
    \item Traverse through the first induced subgraph using its Hamiltonian cycle in $\mathcal{A}^1$, but without going back to the starting node.
    \item Move to the $n-k+1$ global relay token via the edge in $\mathcal{A}^2$, then move to any node in the second induced subgraph from node $n-k+1$ via an edge in $\mathcal{A}^3$.
    \item Traverse through the Hamiltonian cycle in the second induced subgraph, and repeat. 
\end{itemize}
This way, we can construct a global Hamiltonian path that visits all $n$ nodes, and all three conditions are met with $\calO(kn)$ number of edges in expectation.
\end{proof}

\iffalse
\begin{lemma}\label{lem:softmax}
The masked softmax $\sigma_{\BM}$ satisfies Assumption~\ref{ass:softmax}
\end{lemma}
\begin{proof}
Consider the $i$-th row vector $\BA_i \in \R^n$ of the masked attention score matrix $\BA$ (Eqn.9 in main paper). Then, the $j$-th component of the output $\sigma_{\BM}[\BA_i]$ equals
\begin{align*}
    \sigma_{\BM}[\BA_i]_j  \coloneqq \dfrac{\exp(\BA_{ij})}{\sum_{k: \BM_{ik} = 1} \exp(\BA_{ik})}
\end{align*}
Without loss of generality, assume that row vector $\BA_i$ has $m_i$ unmasked terms (query $i$ has $m_i$ neighboring keys in the sampled graph) and is sorted such that the $m_i$ unmasked entries come first in decreasing order, followed by $n - m_i$ masked terms that equal negative infinity. Then assuming $v_{j^*}-\max_{j\neq j^*} v_j \geq \zeta$ implies that $\BA_{i1} - \BA_{i2} \geq \zeta$ and hence $\BA_{ik} \leq \BA_{i1} - \zeta$ for all $1 < k \leq m_i$.
\begin{align*}
    \sigma_{\BM}[t\BA_i]_1 &= \dfrac{\exp(t\BA_{i1})}{\sum_{k=1}^{m_i} \exp(t\BA_{ik})}\\
    &\geq \dfrac{\exp(t\BA_{i1})}{\exp(t\BA_{i1}) + (m_i - 1)\exp(t\BA_{i1} - t\zeta)}\\
    &= \dfrac{1}{1 + (m_i - 1)\exp(-t\zeta)}
\end{align*}
Therefore, one can have a sufficiently large $t_i > 0$ such that $\sigma_{\BM}[t\BA_i] \geq 1 - \eta$ for any $\eta \in (0, 1]$. We can repeat this process for each row $\BA_i$, and pick the largest $t$ possible, which will satisfy the $\sigma_{\BM}[t\BA_i] \geq 1 - \eta$ and $\sum_{j \neq j^*} \sigma_{\BM}[t\BA_i]_j \leq \eta$ for all $i$.
\end{proof}
\fi

Combining Lemma~\ref{lem:pattern} together with Theorem 1 of Yun~et~al.~(2020)~\cite{yun2020} proves our main theorem below which states that \modelname{} is a universal approximator in expectation.

\begin{theorem}
Let $f \in \mathcal{F}$ be class of continuous sequence-to-sequence functions. Let $\mathcal{T}^{h,m,r}_{SBM}$ denote the class of SBM-Transformers with $h$ attention heads, $m$ head dimension, and $r$ dimensions in hidden layers. Then for any $\epsilon > 0$ and $1 \leq p < \infty$, there exists a function $g \in \mathcal{T}^{h,m,r}_{SBM}$ such that
\begin{align*}
    \int_\mathbb{D} \|f(\BX) - \mathbb{E}[g(\BX)]\|_p^p d\BX \leq \epsilon 
\end{align*}
\end{theorem}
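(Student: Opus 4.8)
The plan is to reduce Theorem~1 to the universal approximation result of Yun~et~al.~(2020)~\cite{yun2020} by exhibiting a single parameterization of the (self-loop-augmented) SBM-Transformer whose three attention heads realize, in expectation, attention patterns satisfying Assumption~\ref{ass:pattern}; since the masked softmax is the standard softmax restricted to unmasked entries, Assumption~\ref{ass:softmax} is inherited from~\cite{yun2020}. Concretely, I would first observe that each of the three target patterns $\mathcal{A}^1,\mathcal{A}^2,\mathcal{A}^3$ in Lemma~\ref{lem:pattern} is exactly expressible by an SBM with $k$ clusters: $\mathcal{A}^1$ is block-diagonal and arises by hard-assigning each token $i$ to cluster $\lfloor ik/n\rfloor$ (one-hot rows in $\BQhat,\BKhat$ and identity-like $\BShat$); $\mathcal{A}^2$ (edges from all tokens into $k$ global relays) and $\mathcal{A}^3$ (edges from $k$ global relays to all tokens) are rank-$\le 2$ patterns obtained by making the relay tokens members of every cluster while ordinary tokens are members only of a degenerate "self" block. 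The self-loop modification hard-wires Condition~1. Condition~3 holds with $s=3$ because $\mathcal{A}^2$ followed by $\mathcal{A}^3$ connects every ordered pair of tokens through a relay.

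The substantive step is Condition~2 (the Hamiltonian path in the aggregated graph), and this is where Lemma~\ref{lem:hamiltonian} enters: within each of the $k$ clusters of $\mathcal{A}^1$ I would take the induced subgraph to be a directed Erd\H{o}s--R\'enyi graph $G(n/k, p)$ with $p=f(n/k)=\calO(k/n)$, so that in expectation it contains at least one Hamiltonian cycle while using only $\calO(n/k)$ edges; summed over $k$ clusters plus the $\calO(kn)$ relay edges this gives $\calO(kn)=\calO(n)$ edges in expectation, matching the stated $\calO(n)$ connection budget. The global Hamiltonian path is then stitched together exactly as in the proof of Lemma~\ref{lem:pattern}: traverse cluster~1's Hamiltonian cycle without closing it, hop to relay token $n-k+1$ via $\mathcal{A}^2$, hop into cluster~2 via $\mathcal{A}^3$, traverse its cycle, and iterate. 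Because Lemma~\ref{lem:hamiltonian} only guarantees the Hamiltonian cycle \emph{in expectation}, Assumption~\ref{ass:pattern} is satisfied by the \emph{expected} attention patterns, which is precisely why the theorem's conclusion is phrased with $\mathbb{E}[g(\BX)]$ rather than $g(\BX)$ itself; I would make this "in expectation'' bookkeeping explicit, noting that $\mathbb{E}[g(\BX)]$ is the output of the Transformer run with the continuous masks $\mathbb{E}[\BM]=\BQhat\BShat\BKhat^\top$, and that the relevant sparsity pattern (support of the expected mask) is the union of $\mathcal{A}^1,\mathcal{A}^2,\mathcal{A}^3$.

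With Lemma~\ref{lem:pattern} in hand, the remaining work is to invoke Theorem~1 of~\cite{yun2020}: given any continuous $f$ on the compact domain $\mathbb{D}$, any $\epsilon>0$ and any $1\le p<\infty$, that theorem yields a sparse Transformer with finitely many layers (built from the fixed patterns above, with the per-head/hidden dimensions $h,m,r$ absorbed into the construction or handled by the standard head-splitting argument of~\cite{yun2020}) approximating $f$ to within $\epsilon$ in the $L^p$ sense. Setting $g$ to be the corresponding SBM-Transformer---same weights, with the SBM parameters chosen so that $\mathbb{E}[\BM]$ has the prescribed support and the softmax temperature is taken large enough per Assumption~\ref{ass:softmax}---gives $\int_\mathbb{D}\|f(\BX)-\mathbb{E}[g(\BX)]\|_p^p\,d\BX\le\epsilon$, which is the claim.

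\textbf{Main obstacle.} The delicate point is the interface between "random graph sampled by \texttt{fastRG}'' and "deterministic sparse attention pattern required by~\cite{yun2020}.'' Yun~et~al.'s theorem assumes a \emph{fixed} family of patterns $\{\mathcal{A}^l\}$, whereas our mask is random; the resolution is to run the approximator on the \emph{expected} mask (consistent with the STE viewpoint in Eqn.~\ref{eqn:ste}, where the effective mask is $\BM\odot\mathbb{E}[\BM]$) and to argue that the expected mask's support contains the union of the three engineered patterns, so all three conditions of Assumption~\ref{ass:pattern} hold for that support. One must also check that edges present in $\mathbb{E}[\BM]$ but \emph{outside} the three target patterns (the expected mask is in general fully supported unless edge probabilities are pushed to exactly $0$) do not break the argument---this is fine because~\cite{yun2020} only needs the pattern to \emph{contain} a Hamiltonian path and be strongly connected, and adding edges preserves both; alternatively one parameterizes the SBM so the stray probabilities are negligible. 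Making this "expectation'' reduction airtight, and confirming that the $\calO(n)$ edge budget survives the $G(n/k,p)$ choice, are the only steps requiring real care; everything else is routine instantiation of~\cite{yun2020}.
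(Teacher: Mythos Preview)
Your proposal takes the same route as the paper: verify Assumption~\ref{ass:pattern} for the three SBM-realizable patterns $\mathcal{A}^1,\mathcal{A}^2,\mathcal{A}^3$ via Lemma~\ref{lem:hamiltonian} and the relay-token stitching of Lemma~\ref{lem:pattern}, then invoke Theorem~1 of~\cite{yun2020}. The paper's own proof of the theorem is literally the single sentence ``Combining Lemma~\ref{lem:pattern} together with Theorem~1 of Yun~et~al.~(2020) proves our main theorem,'' so your write-up is strictly more detailed than the original.

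One caution on a point you add that the paper leaves implicit: the identification ``$\mathbb{E}[g(\BX)]$ is the output of the Transformer run with the continuous mask $\mathbb{E}[\BM]$'' is not correct in general, since the masked softmax and the layer composition are nonlinear and do not commute with expectation. Your ``Main obstacle'' paragraph already senses the difficulty and offers the right escape hatch---drive the SBM edge probabilities to $\{0,1\}$ via sigmoid saturation so the mask is deterministic and $\mathbb{E}[g(\BX)]=g(\BX)$, at which point Yun~et~al.\ applies verbatim---but note that this is in tension with the ER-based $\calO(n)$ edge count inside the blocks of $\mathcal{A}^1$, which relies on non-degenerate probabilities. The paper does not resolve this tension either (Lemma~\ref{lem:pattern} speaks only of ``the sparsity patterns induced by the expected attention masks''); both arguments are informal at exactly this interface, so this is not a divergence from the paper so much as a place where you have made an implicit looseness explicit.
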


\cutparagraphup
\section{Asymptotic Cost Analysis}
\addtocounter{footnote}{1}
Table~\ref{tab:asymptotics} shows the asymptotic computational cost and memory footprint of each step an attention head takes in \modelname{} given a single input. Assuming the number of clusters is significantly smaller than the sequence length, we find that both time and memory cost is mostly dominated by the computation of $\BQhat$ and $\BKhat$ when the sampled graph is sparse ({\it i.e.}, $m = \calO(n)$). \footnotetext{Walker's Alias Method~\cite{walker1977} used to sample nodes in \texttt{fastRG} requires $\calO(m+n\log n)$ operations, but the $\log n$ dependency is not visible in general. More information can be found in \cite{fastrg}}\addtocounter{footnote}{2}\footnotetext{We leverage highly optimized Generalized Sampled-Dense-Dense Matrix Multiplication (GSDDMM) operators provided by the Deep Graph Library~\cite{wang2019dgl} that avoids the $\calO(md)$ memory overhead.}
\begin{table}[!h]
    \centering
    \resizebox{.9\textwidth}{!}{\begin{tabular}{c|c c}
        \toprule
        Computation & Time & Memory \\
        \midrule
        Inputs $\BQ$, $\BK$, $\BV$, and $\BC$ & - & $\calO(nd+kd)$\\
        Node assignments $\BQhat$ and $\BKhat$ & $\calO(nd^2 + nkd)$ & $\calO(nd+kd+nk)$\\
        Inter-cluster probabilities $\BShat$  & $\calO(k^2 d)$ & $\calO(k^2)$\\
        Sampling from \texttt{fastRG}~\cite{fastrg} & $\calO(m+n)$\textsuperscript{1} & $\calO(m+nk+k^2)$\\
        Run GAT~\cite{gat} with edge-softmax & $\calO(md)$ & $\calO(m+nd)$\textsuperscript{2}\\
        \midrule
        Total & $\calO(md+nd^2+nkd+k^2d)$ & $\calO(m+nd+nk+kd+k^2)$ \\
        \bottomrule
    \end{tabular}
    }
    \vspace{2mm}
    \caption{Asymptotic costs of individual steps within the attention module of \modelname{}. The sequence length, number of edges, number of clusters, and head dimension are denoted as $n$, $m$, $k$, and $d$, respectively.}
    \label{tab:asymptotics}
    \vspace{-5mm}
\end{table}

A comparison of the overall cost of \modelname{} with those of other baselines is shown in Table~\ref{tab:asymptotics_baselines}. While its complexities most resemble those of Nystr\"omformer~\cite{nystromformer} when the sampled graphs are sparse, the cost of \modelname{} can exceed those of full-attention when the graph is dense, due to the additional computation in the $\text{MLP}_{d\to d}$ used to infer node-to-cluster memberships.

\begin{table}[h!]
    \centering
    \resizebox{.9\textwidth}{!}{\begin{tabular}{c|ccc}
        \toprule
        Model & Time & Memory\\
        \midrule
        Full-attention~\cite{transformer} & $\calO(n^2 d)$ & $\calO(n^2+nd)$\\ 
        \midrule
        Linearized~\cite{lineartransformer} & $\calO(nd^2)$ & $\calO(nd+d^2)$\\ 
        Reformer~\cite{reformer} & $\calO(nd+nk(4n/c)^2)$ & $\calO(nd+nk(4n/c)^2)$\\ 
        Performer~\cite{performer} & $\calO(nkd+kd^2)$ & $\calO(nk+nd)$\\ 
        Linformer~\cite{linformer} & $\calO(nkd+nk)$ & $\calO(nk+nd)$\\ 
        Nystr\"omformer~\cite{nystromformer} & $\calO(nkd+nk^2+k^3)$ & $\calO(nk+nd+kd+k^2)$\\ 
        \midrule
        \modelname{} (ours) & $\calO(md+nd^2+nkd+k^2d)$ & $\calO(m+nd+nk+kd+k^2)$\\ 
        \bottomrule
    \end{tabular}
    }
    \vspace{2mm}
    \caption{Asymptotic computational costs of different attention mechanisms. The $k$ term denotes different parameters for each model: number of clusters for \modelname{}, number of hashing rounds for Reformer~\cite{reformer}, number of random features for Performer~\cite{performer}, the projection rank for Linformer~\cite{linformer}, and the number of landmarks for Nystr\"omformer~\cite{nystromformer}. The additional $c$ term in Reformer~\cite{reformer} indicates the number of hashing chunks, set to $c = \calO(\frac{1}{n})$ as default.}
    \label{tab:asymptotics_baselines}
\end{table}

\section{Experiments}\label{app:results}

For reproducibility, we list the model and training hyperparameter settings used for each task in Table~\ref{tab:lra_hyperparam}. Note that for \modelname{}, we initialize the cluster-embeddings $\BC$ using the kaiming normal distribution~\cite{kaiming2015delving}, which results in an initial attention density of approximately 25\%. Tables~\ref{tab:lra_results} and \ref{tab:lra_lambda} provide the full LRA benchmark results with standard deviations in test-time accuracy and sparsity. As mentioned in the main paper, we find that manually fixing the self-loops in the sampled graphs slightly deteriorates performance, while it helps in proving theoretical expressibility.

\begin{table}[h!]
    \centering
    \resizebox{\textwidth}{!}{\begin{tabular}{l|r|rrrrr|rr}
        \toprule
        Parameter & \textsc{Synthetic} & \textsc{ListOps} & \textsc{Text} & \textsc{Retrieval} & \textsc{Image} & \textsc{Pathfinder} & \textsc{BERT} & \textsc{GLUE} \\
        \midrule
        \# of layers & 1 & 2 & 2 & 2 & 2 & 2 & 4 & 4\\ 
        \# of heads & 1 & 2 & 2 & 2 & 2 & 2 & 8 & 8\\ 
        Embedding dim. & 32 & 64 & 64 & 64 & 64 & 64 & 512 & 512\\ 
        Hidden dim. & 32 & 128 & 128 & 128 & 128 & 128 & 2048 & 2048\\ 
        Head dim. & 32 & 32 & 32 & 32 & 32 & 32 & 64 & 64\\ 
        Sequence len. & 256 & 2048 & 3072 & 4096 & 1024 & 1024 & 512 & 512\\ 
        Dropout & 0.0 & 0.1 & 0.1 & 0.1 & 0.1 & 0.1 & 0.1 & 0.1\\ 
        Attn. dropout & 0.0 & 0.1 & 0.1 & 0.1 & 0.1 & 0.1 & 0.1 & 0.1\\ 
        Pooling mode & N/A & MEAN & MEAN & MEAN & MEAN & MEAN & N/A & MEAN\\ 
        \# of classes & 2 & 10 & 2 & 2 & 10 & 2 & 50265 & 2 or 3\\ 
        Batch size & 256 & 128 & 128 & 32 & 1024 & 1024 & 256 & 32\\ 
        Learning rate & 1e-3 & 5e-4 & 5e-4 & 1e-4 & 5e-4 & 5e-4 & 1e-4 & 3e-5\\ 
        \# of training epochs & 2000 & 5000 & 20000 & 30000 & 35000 & 62400 & 50 & 5\\ 
        \bottomrule
    \end{tabular}
    }
    \vspace{2mm}
    \caption{Hyperparameter settings used synthetic, LRA, and GLUE experiments. For methods other than full attention~\cite{transformer}, we use 128 clusters for \modelname{}, 2 hashing rounds for Reformer~\cite{reformer}, 256 landmarks for Nystr\"omformer~\cite{nystromformer}, and 256 dimensions for Linformer~\cite{linformer} and Performer~\cite{performer}.}
    \label{tab:lra_hyperparam}
\end{table}

\begin{table}[h!]
    \centering
    \resizebox{\textwidth}{!}{\begin{tabular}{c|ccccc|c}
        \toprule
        Model & \textsc{ListOps}(2K) & \textsc{Text}(3K) & \textsc{Retrieval}(4K) & \textsc{Image}(1K) & \textsc{Pathfinder}(1K) & Avg. \\
        \midrule
        Full-attention~\cite{transformer} & 37.22$\pm$0.52 & 64.93$\pm$0.46 & 79.55$\pm$1.22 & 40.38$\pm$0.76 & 74.26$\pm$0.57 & 59.27$\pm$0.44\\ 
        \midrule
        Linearized~\cite{lineartransformer} & 37.46$\pm$0.57 & 64.90$\pm$0.49 & \bf 81.10$\pm$0.16 & 38.48$\pm$0.57 & 74.61$\pm$1.26 & 59.31$\pm$0.15\\ 
        Reformer~\cite{reformer} & 22.92$\pm$0.41 & 64.70$\pm$0.12 & 77.25$\pm$0.15 & \bf 43.65$\pm$0.16 & 70.28$\pm$1.45 & 55.76$\pm$0.29\\ 
        Performer~\cite{performer} & 18.25$\pm$0.12 & 65.00$\pm$0.50 & 79.01$\pm$1.66 & 39.80$\pm$0.46 & 70.79$\pm$1.26 & 54.57$\pm$0.55\\ 
        Linformer~\cite{linformer} & \bf 38.44$\pm$0.14 & 56.28$\pm$1.06 & 78.09$\pm$0.12 & 39.53$\pm$0.57 & 67.62$\pm$0.65 & 55.99$\pm$0.14\\ 
        Nystr\"omformer~\cite{nystromformer} & 37.22$\pm$0.51 & \underline{65.46$\pm$0.40} & 79.35$\pm$0.40 & \underline{43.07$\pm$0.42} & 71.97$\pm$1.30 & \underline{59.41$\pm$0.12}\\ 
        \midrule
        \multirow{2}{*}{\modelname{} ($+\BI$)} & \underline{37.60$\pm$0.38} & 64.09$\pm$1.39 & 79.74$\pm$0.27 & 40.64$\pm$0.72 & \underline{74.93$\pm$0.32} & \multirow{2}{*}{59.40$\pm$0.20}\\
        & \underline{(24.64$\pm$2.49\%)} & (25.64$\pm$0.64\%) & (24.26$\pm$5.21\%) & (24.54$\pm$3.98\%) & \underline{(23.84$\pm$3.59\%)} & \\
        \multirow{2}{*}{\modelname{} ($+\Bzero$)} & 37.45$\pm$0.44 & \bf 65.79$\pm$0.27 & \underline{80.00$\pm$0.21} & 41.31$\pm$0.35 & \bf 75.12$\pm$0.49 & \multirow{2}{*}{\bf 59.93$\pm$0.35}\\ 
        & (20.09$\pm$15.71\%) & \bf (26.10$\pm$0.01\%) & \underline{(29.46$\pm$3.84\%)} & (20.49$\pm$11.43\%) & \bf (18.56$\pm$0.52\%) & \\
        \bottomrule
    \end{tabular}
    }
    \vspace{2mm}
    \caption{LRA benchmark results. Bold and underlined results indicate best and 2nd best test accuracy for each task, respectively. Numbers enclosed in parentheses for \modelname{} indicate the density of graphs sampled during test time averaged across all attention heads. For the \modelname{} models, $(+\BI)$ indicates that self-loops are manually fixed while $(+\Bzero)$ indicates model without the modification.}
    \label{tab:lra_results}
\end{table}

\begin{table}[h!]
    \centering
    \resizebox{\textwidth}{!}{\begin{tabular}{c|ccccc|c}
        \toprule
        $\lambda$ & \textsc{ListOps}(2K) & \textsc{Text}(3K) & \textsc{Retrieval}(4K) & \textsc{Image}(1K) & \textsc{Pathfinder}(1K) & Avg. \\
        \midrule
        \multirow{2}{*}{0} & 37.45$\pm$0.44 & \bf 65.79$\pm$0.27 & \underline{80.00$\pm$0.21} & 41.31$\pm$0.35 & \underline{75.12$\pm$0.49} & \multirow{2}{*}{\underline{59.93$\pm$0.35}}\\
        & (20.09$\pm$15.71\%) & \bf (26.10$\pm$0.01\%) & \underline{(29.46$\pm$3.84\%)} & (20.49$\pm$11.43\%) & \underline{(18.56$\pm$0.52\%)} & \\
        \multirow{2}{*}{$10^{-4}$} & 37.76$\pm$0.60 & 65.48$\pm$0.86 & 79.93$\pm$0.16 & \underline{41.35$\pm$0.35} & \bf 75.46$\pm$0.46 & \multirow{2}{*}{\bf 60.00$\pm$0.36}\\ 
        & (10.48$\pm$7.58\%) & (26.26$\pm$0.53\%) & (24.62$\pm$3.19\%) & \underline{(10.70$\pm$8.49\%)} & \bf (5.16$\pm$1.17\%) & \\
        \multirow{2}{*}{$10^{-3}$} & \bf 38.23$\pm$0.63 & 65.18$\pm$0.46 & 80.00$\pm$0.99 & 41.17$\pm$0.53 & 74.49$\pm$0.74 & \multirow{2}{*}{59.81$\pm$0.48}\\ 
        & \bf (10.46$\pm$7.26\%) & (26.03$\pm$0.06\%) & (21.70$\pm$2.68\%) & (24.60$\pm$8.61\%) & (3.82$\pm$0.52\%) & \\
        \multirow{2}{*}{$10^{-2}$} & \underline{38.20$\pm$0.29} & \underline{65.59$\pm$0.24} & \bf 80.44$\pm$1.24 & \bf 42.20$\pm$0.64 & 72.79$\pm$0.80 & \multirow{2}{*}{59.84$\pm$0.42}\\ 
        & \underline{(2.95$\pm$0.88\%)} & \underline{(22.43$\pm$1.73\%)} & \bf (6.99$\pm$2.28\%) & \bf (3.95$\pm$0.68\%) & (3.76$\pm$0.27\%) & \\
        \multirow{2}{*}{$10^{-1}$} & 37.76$\pm$0.83 & 64.48$\pm$0.58 & 79.46$\pm$0.47 & 41.35$\pm$0.40 & 73.79$\pm$0.07 & \multirow{2}{*}{59.37$\pm$0.37}\\ 
        & (1.15$\pm$0.15\%) & (10.62$\pm$2.74\%) & (2.49$\pm$0.58\%) & (1.33$\pm$0.37\%) & (2.61$\pm$0.22\%) & \\
        \bottomrule
    \end{tabular}
    }
    \vspace{2mm}
    \caption{LRA benchmark results of \modelname{} with increasing density regularization weight $\lambda$. Applying a density regularizer helps in encouraging sparser attention patterns which induce less computational cost, while retaining competitive performance.}
    \label{tab:lra_lambda}
\end{table}

\vfill\pagebreak
Lastly, we qualitatively analyze which inputs lead to sparse or dense attention in \modelname{}. For easy visualization of attention densities, we choose two image-based tasks in LRA, \textsc{Pathfinder} and \textsc{Image}. We pick two model checkpoints that performed best on each of the two tasks under graph density regularization, one trained with $\lambda = 10^{-4}$ for \textsc{Pathfinder} and another trained with $\lambda = 10^{-2}$ for \textsc{Image}, and run predictions on the respective test sets. Figures~\ref{fig:pathfinder_patterns} and \ref{fig:image_patterns} show the head-wise attention densities per input at different levels. 

\begin{figure}[t!]
     \centering
     \begin{subfigure}{\textwidth}
         \centering
         \includegraphics[width=\textwidth]{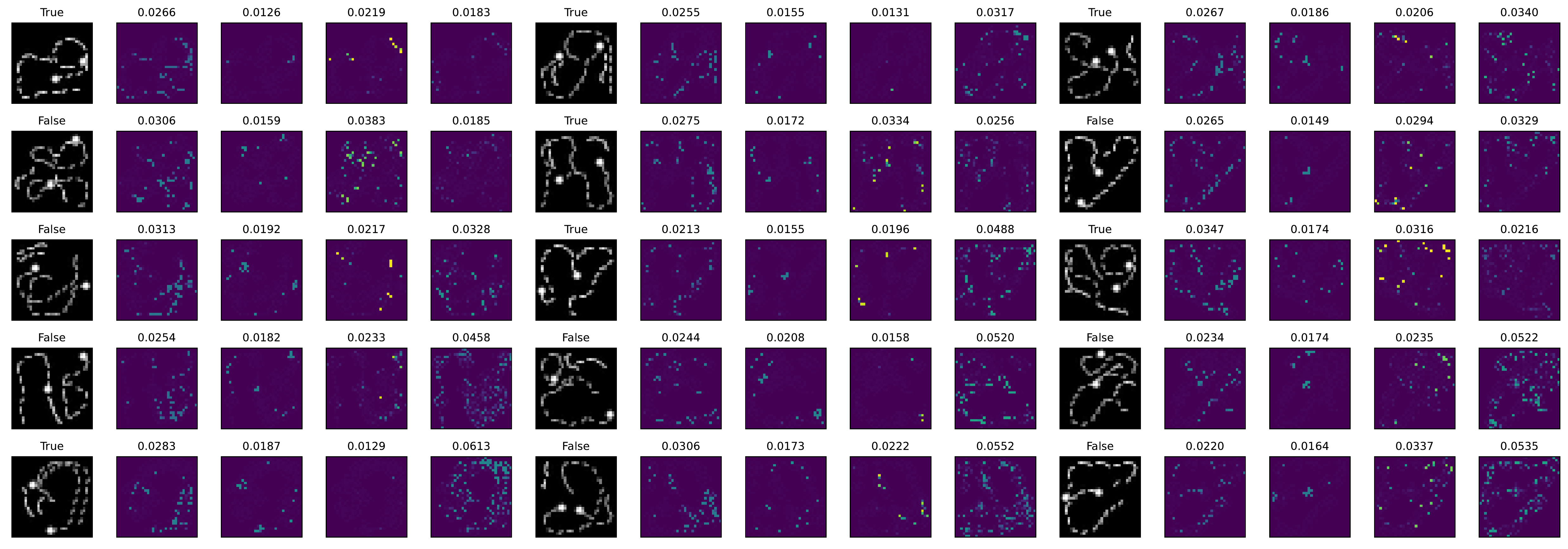}
         \caption{Examples with low attention density}
         \label{fig:pathfinder_patterns_sparse}
     \end{subfigure}
     %\begin{subfigure}{\textwidth}
     %    \centering
     %    \includegraphics[width=\textwidth]{}
     %    \caption{Examples with close-to-average attention density}
     %\end{subfigure}
     \begin{subfigure}{\textwidth}
         \centering
         \includegraphics[width=\textwidth]{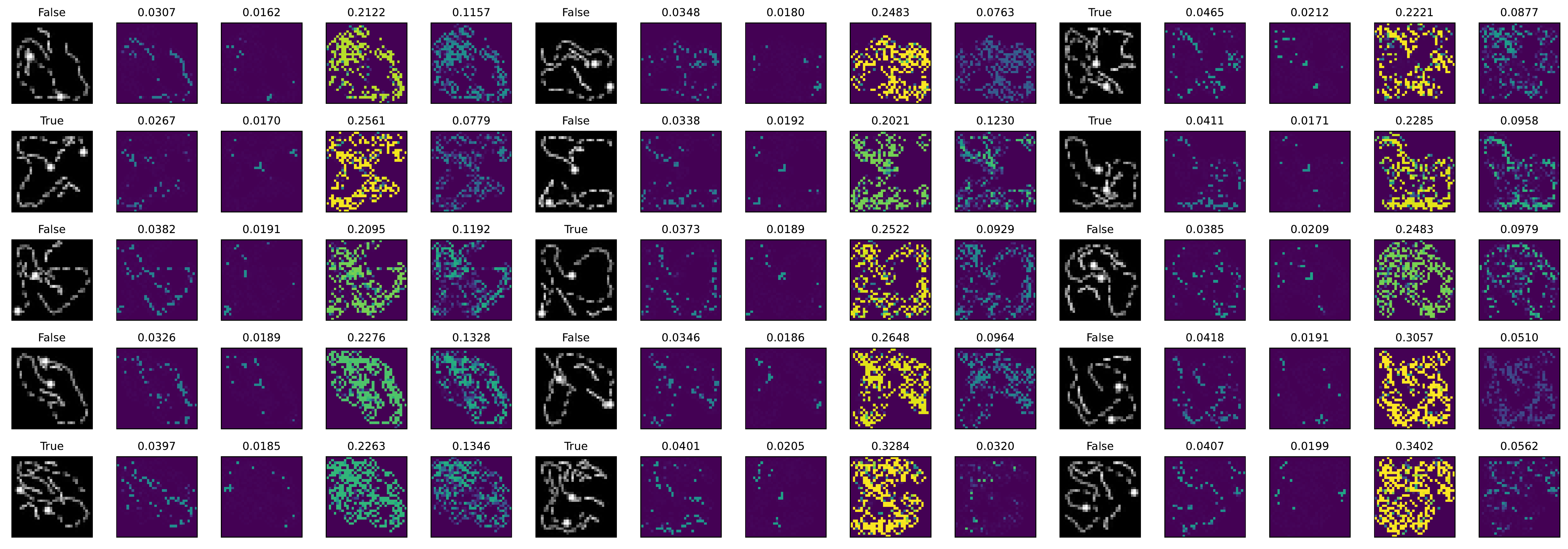}
         \caption{Examples with high attention density}
         \label{fig:pathfinder_patterns_dense}
     \end{subfigure}
     \caption{Attention density plots within individual attention heads given inputs from the LRA \textsc{Pathfinder} test set. All examples shown are from a subset of the test set that the model has predicted correctly. For each set of 5 images, the leftmost image shows the original input image of which the title shows the ground-truth label. To its right are attention density plots from two heads of the first layer followed by those from two heads of the second layer. Above each plot is the actual numeric attention density between 0 and 1. The color in each pixel indicates how many other pixels attend to that particular pixel (a color closer to bright yellow indicates more attention).}
     \label{fig:pathfinder_patterns}
     \vspace{-15mm}
\end{figure}

In Figure~\ref{fig:pathfinder_patterns}, the second layer shows large variance in attention density across different \textsc{Pathfinder} inputs, while the first layer remains sparse overall. With some exceptions, we find that the attention density of this layer is somewhat correlated with the difficulty of each input. Figure~\ref{fig:pathfinder_patterns_sparse} shows visually easy inputs with near-perpendicular intersections or no intersection at all, allowing correct predictions with less than 5\% average attention density. On the other hand, Figure~\ref{fig:pathfinder_patterns_dense} shows examples with harder difficulty, due to having more lines and convoluted intersections. We can see that the model uses much denser attention in such cases, and thus conjecture that the model is adaptively choosing to look at more pixel-to-pixel interactions in response to the complexity of the input.

Figure~\ref{fig:image_patterns} also shows a clear distinction between images that induce different levels of attention density. Under regularization, the first layer of \modelname{} focuses attention onto dark areas in the image as shown in Figure~\ref{fig:image_patterns_dense}, using the contrast in the image for better prediction. When the image has high overall intensity as in Figure~\ref{fig:image_patterns_sparse}, however, the model uses less than 3\% attention on average, focusing most of the prediction onto the skip-connections, FFNs, and a small number of pixel-to-pixel interactions. Considering that this model achieves a competitive 42.20\% accuracy, this shows that \modelname{} can well balance the tradeoff between computational cost vs. performance, further supporting the power of our adaptively sparse attention module.

%\vfill\pagebreak
\pagebreak
\begin{figure}[t!]
     \centering
     \begin{subfigure}{\textwidth}
         \centering
         \includegraphics[width=\textwidth]{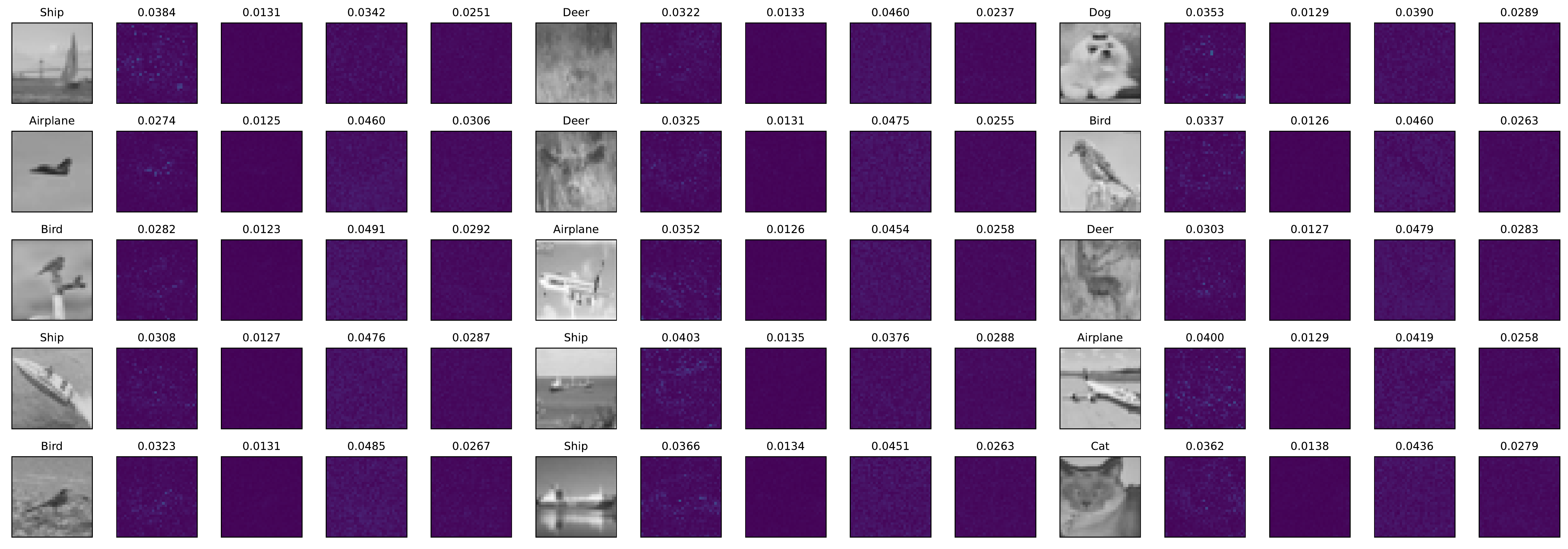}
         \caption{Examples with low attention density}
         \label{fig:image_patterns_sparse}
     \end{subfigure}
     %\begin{subfigure}{\textwidth}
     %    \centering
     %    \includegraphics[width=\textwidth]{}
     %    \caption{Examples with close-to-average attention density}
     %\end{subfigure}
     \begin{subfigure}{\textwidth}
         \centering
         \includegraphics[width=\textwidth]{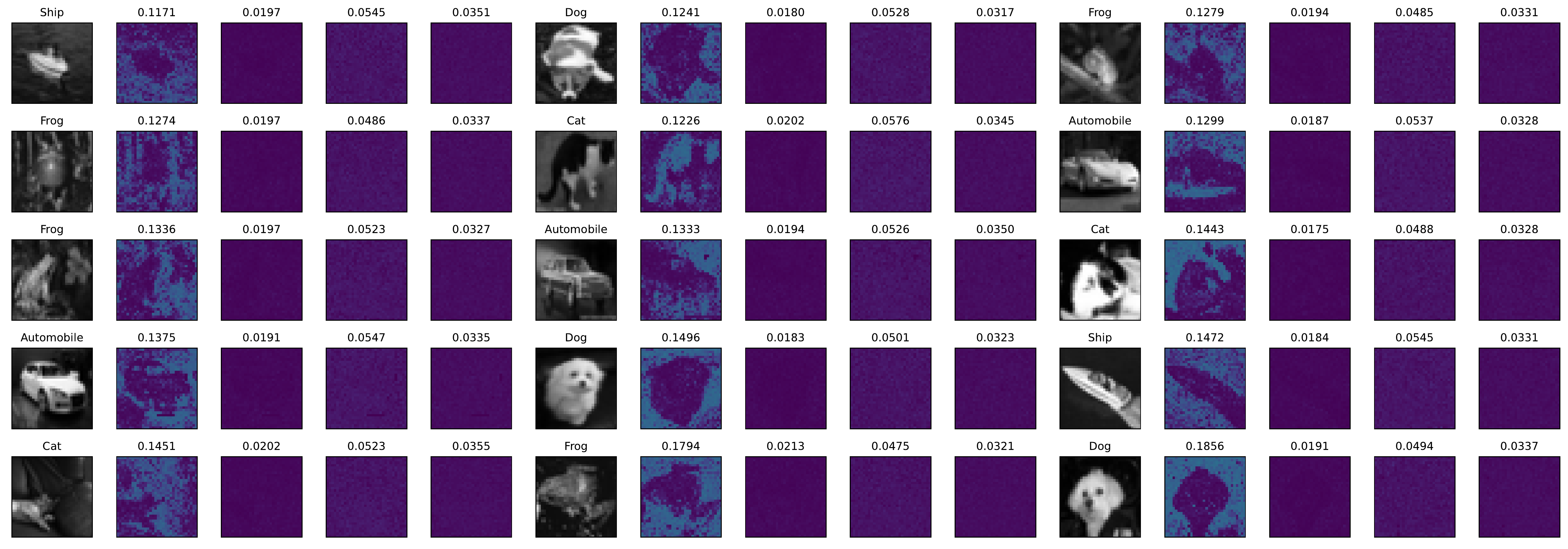}
         \caption{Examples with high attention density}
         \label{fig:image_patterns_dense}
     \end{subfigure}
     \caption{Similar visualization as Figure~\ref{fig:pathfinder_patterns} for the LRA \textsc{Image} test set.}
     \label{fig:image_patterns}
\end{figure}

\end{document}